\documentclass{article}

\usepackage{microtype}
\usepackage{graphicx}
\usepackage{subcaption}
\usepackage{booktabs}

\usepackage[accepted]{icml2026}

\usepackage{placeins}
\usepackage{cuted}
\usepackage{stfloats}

\usepackage{amsmath}
\usepackage{amssymb}
\usepackage{mathtools}
\usepackage{amsthm}

\usepackage{booktabs}
\usepackage{multirow}
\usepackage{array}
\usepackage{colortbl}
\usepackage{xcolor}

\usepackage{amsmath}
\usepackage{amsthm}
\usepackage{amsfonts}
\usepackage{amssymb}

\usepackage{bm}
\usepackage{mathtools}
\usepackage{enumitem}

\theoremstyle{plain}
\newtheorem{theorem}{Theorem}[section]

\newtheorem{corollary}[theorem]{Corollary}
\theoremstyle{definition}
\newtheorem{definition}[theorem]{Definition}
\newtheorem{assumption}[theorem]{Assumption}
\theoremstyle{remark}

\usepackage[textsize=tiny]{todonotes}

\usepackage{pifont}
\usepackage{tocloft}
\usepackage[toc,page,header]{appendix}
\usepackage{adjustbox}
\usepackage{minitoc}

\renewcommand \thepart{}
\renewcommand \partname{}

\newcommand\blfootnote[1]{%
  \begingroup
  \renewcommand\thefootnote{}\footnote{#1}%
  \addtocounter{footnote}{-1}%
  \endgroup
}
\newcommand{\ie}{\emph{i.e.}}
\newcommand{\eg}{\emph{e.g.}}

\newlength\savewidth\newcommand\shline{\noalign{\global\savewidth\arrayrulewidth
  \global\arrayrulewidth 1pt}\hline\noalign{\global\arrayrulewidth\savewidth}}

\icmltitlerunning{XYZFlow: Scaling Multidimensional Shortcut Flows for Efficient Generative Modeling}

\definecolor{cvprblue}{rgb}{0.21,0.49,0.74}

\usepackage[pagebackref=true,breaklinks=true,colorlinks=true,bookmarks=false]{hyperref}

\definecolor{deepred}{HTML}{940000}
\hypersetup{linkcolor=deepred}
\hypersetup{urlcolor  = [rgb]{0.4,0.15,0.95}}
\hypersetup{citecolor=[rgb]{0.4,0.15,0.95}}

\newcommand{\XYZFlow}{%
    \textcolor{blue!40!purple!60}{X}%
    \textcolor{cyan!40!green!50}{Y}%
    \textcolor{yellow!40!orange}{Z}%
    \textcolor{cyan!60!black}{F}%
    \textcolor{cyan!60!black}{l}%
    \textcolor{cyan!60!black}{o}%
    \textcolor{cyan!60!black}{w}%
}

\newcommand{\vx}{\mathbf{x}}
\newcommand{\vc}{\mathbf{c}}
\newcommand{\vv}{\mathbf{v}}

\icmlsetsymbol{equal}{*}
\begin{document}

\twocolumn[
  \icmltitle{\vspace{-2.5mm}
    \raisebox{-0.3\height}{\includegraphics[height=2em]{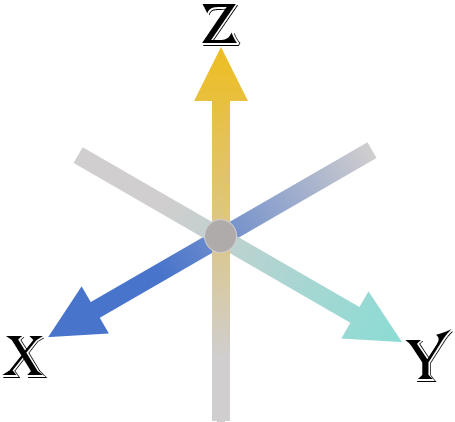}}
    \XYZFlow: Scaling Multidimensional Shortcut Flows\\
    for Efficient Generative Modeling
  }
  \begin{icmlauthorlist}
    \icmlauthor{Jinxiu Liu}{cuhk,equal}
    \icmlauthor{Xuanming Liu}{westlake,equal}
    \icmlauthor{Kangfu Mei}{jhu}
    \icmlauthor{Yandong Wen}{westlake}
    \icmlauthor{Weiyang Liu}{cuhk}
  \end{icmlauthorlist}

  \begin{center}
        \vspace{2mm}
        \fontsize{9pt}{\baselineskip}\selectfont
        {\tt\href{http://spherelab.ai/xyzflow}{\textbf{spherelab.ai/xyzflow}}}
        \vspace{-3mm}
   \end{center}

  \icmlaffiliation{cuhk}{CUHK}
  \icmlaffiliation{westlake}{Westlake University}
  \icmlaffiliation{jhu}{Johns Hopkins University}

  \icmlcorrespondingauthor{Jinxiu Liu}{jinxiuliu0628@foxmail.com}
  \icmlcorrespondingauthor{Weiyang Liu}{wyliu@cse.cuhk.edu.hk}

  \icmlkeywords{Machine Learning, ICML}

  \vskip 0.3in
]

\printAffiliationsAndNotice{}

\doparttoc
\faketableofcontents

\begin{abstract}

High-fidelity image generation faces a trade-off between speed and quality.
Diffusion models produce strong visuals but require costly iterative sampling. Existing efficient methods mainly distill pretrained models into few-step samplers, a challenging process that depends heavily on teacher-model quality.
In this paper, we introduce XYZFlow, a framework that rethinks efficient generation through multidimensional scaling of flow matching. 
Unlike single-step mappings, XYZFlow enhances expressivity by making probability paths more identifiable and learnable through structured multidimensional conditioning. We view autoregressive modeling as implicit flow straightening, where richer context reduces trajectory ambiguity.
XYZFlow realizes this idea through two orthogonal dimensions: temporal scaling, which uses non-Markovian conditioning on the full denoising history; and spatial scaling, enabled by Next Shortcut Prediction, which sequentially generates patches using preceding patches' denoising trajectories as priors.
Experiments show that XYZFlow achieves state-of-the-art performance, with 7.2-8.5$\times$ teacher speedups and competitive FID, while Next Shortcut Prediction delivers superior quality-latency trade-offs over model scaling or step reduction.

\end{abstract}

\vspace{-4mm}
\section{Introduction}
\label{sec:intro}
\vspace{-.75mm}
Generative models, particularly diffusion probabilistic models, have revolutionized synthetic data generation across various modalities~\cite{diffusion,song2019score,ddpm,rombach2022high,ddpm}. The dominant paradigm involves a gradual forward process that incrementally corrupts data with noise, followed by a learned reverse process for iterative data reconstruction. While models like DDPM~\cite{ddpm} and Score-SDE~\cite{scoresde} achieve remarkable quality, this performance comes at a substantial computational cost~\cite{lu2022dpm,dpm_plus_plus,karras2022elucidating}, often requiring hundreds of neural function evaluations per sample. Such a cost prevents these models from real-time applications.

\blfootnote{Jinxiu Liu’s and Xuanming Liu’s work was completed during internships at CUHK and Westlake, respectively.}

\vspace{-6.25mm}
The pursuit of efficiency has centered on a key insight: few-step generation quality fundamentally depends on the uniqueness of the noise-to-data trajectory mapping~\cite{lipman2022flow,frans2024one,boffi2024flow,salimans2022progressive}. This uniqueness enables effective distillation by reducing the problem from learning complex distributions to fitting deterministic functions. Pioneering methods like Rectified Flows~\cite{lipman2022flow}, Consistency Models~\cite{song2023consistency} and Shortcut Models~\cite{frans2024one} address this by constructing straight, deterministic probability flows through novel training objectives. However, these approaches primarily focus on improvements to distillation algorithms themselves, which is a challenging and model-dependent endeavor~\cite{salimans2022progressive,geng2024one,sauer2023adversarial}.

Despite recent progresses, we identify another fundamental challenge that remains largely unexplored: \textit{how can we scale the expressive power of generative models under strict sampling step constraints, without relying solely on distillation strategies? More profoundly, can we design probability flows that are intrinsically more unique and learnable through model architecture?} As conceptually visualized in Figure~\ref{fig:trajectory}(a), the ambiguous trajectories in conventional denoising stem from a lack of focused constraints. 

In this paper, we introduce a new scaling paradigm. Instead of extensive scaling through added parameters or steps, we scale \textit{intensively} by enhancing probability flow expressivity via structured, multidimensional conditionalization. We reinterpret autoregressive modeling not merely as a generative strategy, but as an implicit mechanism for flow enhancement and uniqueness enforcement~\cite{mar}. The expanding autoregressive context imposes progressively specific constraints, reducing flow variance and straightening trajectories. This perspective inherits the insight from flow straightening methods~\cite{lipman2022flow,frans2024one,albergo2023building} where deterministic paths are crucial for efficient distillation, demonstrating that \textit{structured conditioning} enforces such uniqueness.

\begin{figure}[t!]
\centering
\includegraphics[width=1\linewidth]{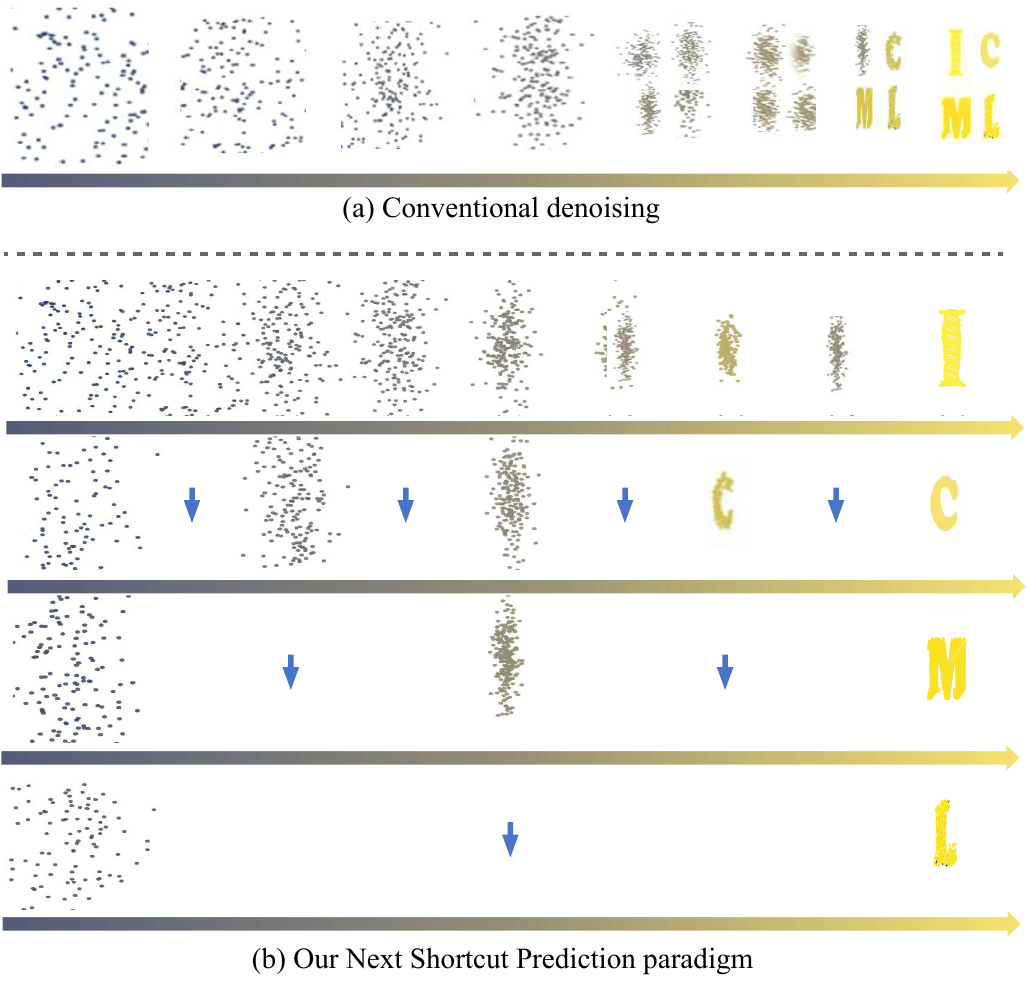}
\vspace{-4.75mm}
\caption{
    (a) Conventional one-shot denoising suffers from overlapping and ambiguous probability paths (blurred results) as the model attempts to denoise the entire image at once. (b) Our \textbf{Next Shortcut Prediction} paradigm: Denoising proceeds sequentially patch-by-patch (\eg, for patches \textbf{I, C, M, L}). The \textbf{rightward small arrows} trace the denoising trajectory of each patch over time. Crucially, the \textbf{downward blue arrows} transfer the complete denoising trajectory of the preceding patch as a powerful prior. This allows subsequent patches to leverage the established context, straightening their paths and requiring fewer denoising steps (longer horizontal sequences) to achieve high fidelity.
}
\label{fig:trajectory}
\vspace{-3.25mm}
\end{figure}

Guided by this insight, we propose \textbf{\XYZFlow}, a framework that scales flow matching along two orthogonal dimensions for high-fidelity few-step generation, complementary to the prevailing path of distillation-based step compression. (1) \textbf{Temporal Scaling}: We condition each flow step on the complete history of previous states, creating a temporal coordinate system that straightens trajectories. This transforms denoising from Markovian to non-Markovian, where the KV cache of past states acts as a conditioning anchor, inspired by recent advances in recurrent diffusion processes~\cite{far}. (2) \textbf{Spatial Scaling}: We propose \textit{Next Shortcut Prediction} based on next-patch prediction. By dividing images into grids (\eg, $2\times2$), this mechanism sequentially generates patches. Unlike standard patch-wise generation that treats patches independently, our method explicitly transfers the \emph{denoising trajectory} (not just the final output) as an effective prior for subsequent patches. As illustrated in Figure~\ref{fig:trajectory}(b), the full denoising trajectory of the first patch serves as a conditional guidance, enabling faster generation without any quality loss. 

In principle, we demonstrate that multidimensional scaling equips probability paths with high-dimensional coordinate systems. While flow straightening methods seek unique points in one-dimensional flows, our approach enhances uniqueness through orthogonal dimensional coordinates. We ground our method in two crucial principles: (1) strengthening conditions can drive reverse process variance toward zero, ensuring mapping uniqueness~\cite{song2023consistency,ict,ect,scm,yang2024consistency}; (2) autoregressive trajectories of preceding patches can effectively guide subsequent predictions, straightening paths in few-step regimes~\cite{far,lazymar,speculative_mar,ren2024flowar}. Our contributions are summarized below:

\vspace{1mm}

\begin{itemize}[leftmargin=*,nosep]
\setlength\itemsep{0.645em}
    \item \textbf{Novel Scaling Paradigm.} We propose to scale generative models by enhancing probability flow expressivity through structured multidimensional conditionalization, advocating that scaling \textit{constraint dimensionality} provides a principled path to mapping uniqueness.
    
    \item We introduce XYZFlow, a practical framework that implements both temporal and spatial flow scaling, featuring \textbf{Next Shortcut Prediction} for efficient inference-time cross-patch implicit trajectory guidance.
    
    \item \textbf{Strong Theoretical Justification.} We establish a theoretical framework that formalizes autoregressive modeling as explicit flow enhancement, and empirically demonstrate competitive few-step generation on ImageNet 256$\times$256, highlighting dimensional scaling as a promising alternative to conventional approaches.
\end{itemize}

\section{Related Work}
\label{sec:related_work}

\paragraph{Few-step Diffusion and Flow Matching.}
Diffusion models \cite{diffusion,song2019score,ddpm,scoresde,song2020denoising} and their flow matching extensions \cite{lipman2022flow,albergo2023building,liu2022flow} have established a powerful framework for generative modeling. Current research on efficient sampling primarily follows a path of \textit{extensive scaling}, focusing on refining distillation algorithms or training objectives. Distillation-based methods \cite{salimans2022progressive,geng2024one,sauer2023adversarial,di,yin2024onestep} aim to compress pre-trained models, while consistency-type approaches \cite{song2023consistency,ict,ect,scm,yang2024consistency} enforce self-consistency constraints along trajectories. Recent works like Flow Map Matching \cite{boffi2024flow} and Shortcut Models \cite{frans2024one} further explore self-consistency principles to straighten probability paths, with Inductive Moment Matching \cite{imm} modeling the self-consistency of stochastic interpolants at different time steps. While these methods have advanced the state-of-the-art performance, their reliance on distillation algorithm improvements represents a form of extensive scaling that faces fundamental challenges in model dependency and optimization complexity. In contrast, our work addresses the core challenge of trajectory uniqueness with \textit{intensive scaling}, which enhances flow expressivity through multidimensional conditionalization rather than pursuing better distillation strategies for existing flows.

\textbf{Autoregressive Models for Visual Generation}.
Autoregressive image generation has evolved from discrete tokenization \cite{vqvae2,vqgan,lee2022autoregressive} to continuous representations that avoid quantization errors \cite{mar,ren2024flowar,xar,harmon}. Methods like MAR~\cite{mar} and DISA~\cite{zhao2025disa} combine autoregressive modeling with diffusion processes, while acceleration techniques focus on caching \cite{lazymar} or speculative decoding \cite{speculative_mar}. FAR \cite{far} replaces the diffusion head of MAR~\cite{mar} with a shortcut model, accelerating through architectural changes. In a concurrent work, DART~\cite{gu2024dartdenoisingautoregressivetransformer} and ARD~\cite{kim2025autoregressive} employ an autoregressive model that sequences 2D token maps from the diffusion process. However, unlike DART which is a standard multi-step diffusion model and only apply autoregression in denoising dimension, we reinterpret autoregressive modeling as an implicit mechanism for flow enhancement and uniqueness enforcement not only in denoising dimension but also in spatial dimension, and propose a distillation method XYZFlow that operates in the low-step regime. Specifically, XYZFlow reduces inference steps to 3-4 by following the backward ODE trajectories of a pre-trained diffusion model, while DART uses the forward noising process on ground truth data.

\vspace{-.65mm}
\section{The Proposed XYZFlow Framework}
\label{sec:methodology}

\begin{figure*}[t]
    \centering
    \includegraphics[width=1\linewidth]{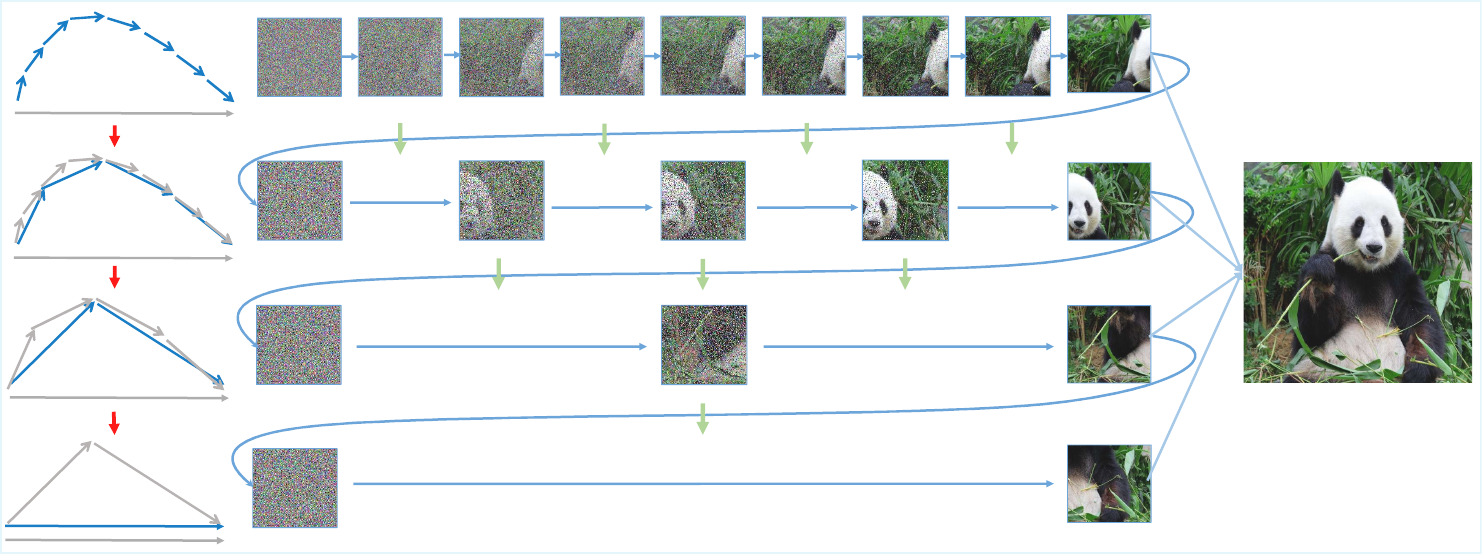}
    \vspace{-5mm}
    \caption{\textbf{Next Shortcut Prediction in XYZFlow}. (Top-Left) Flow diagram showing the generation sequence, where a blue curve represents progressively strengthening constraints. (Top-Right) Visualization of a non-uniform patch-based denoising process: the first image patch undergoes the most denoising steps, while subsequent patches are generated with fewer steps (``shortcuts''). This forms a long autoregressive sequence where the denoising flow from prior patches (green and blue arrows) guides the denoising of subsequent ones.}
    \label{fig:next_shortcut}
    \vspace{-.25mm}
\end{figure*}

We introduce XYZFlow, a framework that enhances the expressivity of flow models through multidimensional conditioning. 
Motivated by the view that autoregressive modeling provides an effective mechanism for flow straightening by incrementally imposing constraints, we propose a new training objective, Next Shortcut Prediction, which enables efficient generation via multidimensional conditioning. We interpret the expanding autoregressive context as a sequence of increasingly specific and gradually stronger constraints that reduce variance in the probability flow and straighten trajectories. This perspective extends existing insights from flow straightening methods by showing how structured conditional information can better promote path uniqueness. Within this framework, Next Shortcut Prediction operationalizes the principle of intensive scaling, as it leverages spatial constraints to construct a high-dimensional coordinate system that effectively enforces flow uniqueness and yields straighter trajectories.

\subsection{Autoregressive Modeling as Flow Enhancement}

Traditional autoregressive approaches frame image generation as a sequence of conditional predictions. Given an image divided into patches $\langle \mathbf{x}^{1}, \mathbf{x}^{2}, \ldots, \mathbf{x}^{P} \rangle$, the autoregressive generation process is formulated as the chain rule:
\begin{equation}
\footnotesize
p(\mathbf{x}^{1}, \mathbf{x}^{2}, \ldots, \mathbf{x}^{P}) = \prod_{p=1}^{P} p(\mathbf{x}^{p} \mid \mathbf{x}^{1}, \ldots, \mathbf{x}^{p-1}).
\end{equation}
While this formulation is mathematically sound, we reconceptualize it through the lens of flow enhancement. The growing context $\mathbf{x}^{1}, \ldots, \mathbf{x}^{p-1}$ acts as a set of progressively stronger constraints, which can gradually reduce the variance of the conditional distribution $p(\mathbf{x}^{p} | \cdots)$ and straightens the probability flow path from noise to data. This conceptual shift allows us to leverage autoregressive structure not just for sequential prediction, but for intrinsically making the flow more unique and deterministic. This insight has been used for ill-posed 3D reconstruction~\cite{Liu2022SCR,Wen2021CEST} and video generation~\cite{liu2026diagdistill}.

Formally, we define flow enhancement as the process where conditional information $C$ transforms a base probability flow $p(\mathbf{x})$ into a conditioned flow $p(\mathbf{x}|C)$ with reduced path variance: $\mathbb{V}[\mathbf{x}_t|C] < \mathbb{V}[\mathbf{x}_t]$, leading to straighter and more deterministic trajectories. This variance reduction directly contributes to mapping uniqueness. For further theoretical justification, please refer to the Appendix.

\subsection{A Motivating Observation from Progressive Constraint Strengthening}
\label{subsec:key_observation}

Our approach is motivated by the empirical observation that as more patches are generated, the conditional distribution becomes more constrained, making subsequent patches easier to sample. As illustrated in Figure~\ref{fig:next_shortcut}, this observation manifests in three important phenomena: 

\textbf{Next patches can be better predicted at later generation stages.} When we probe the conditional strength by predicting $\mathbf{x}^{p}$ based on the representation of previously generated patches, predictions for early positions are blurry and lack detail, while predictions for later positions become increasingly precise. This demonstrates that accumulated context provides stronger conditional guidance, as shown in the panda image sequence (top-right of Figure~\ref{fig:next_shortcut}).

\textbf{The variance of latent patch distributions decreases for later patches.} When sampling multiple possible patches for each position during generation, the variance among sampled patches is high for early positions but decreases significantly for later positions, indicating a more concentrated and simpler distribution under stronger conditioning. This is visually validated by the progression from noisy to clean patches in the bottom rows of Figure~\ref{fig:next_shortcut}.

\textbf{Denoising paths become straighter for later patches.} Following Rectified Flow, we measure path straightness using:
\begin{equation}
\footnotesize
\! S(\{\mathbf{x}_{t}\}_{t=0}^{1}, \mathbf{z}) = \mathbb{E}_{t \sim [0,1]} \left[ \left\| (\mathbf{x}_{1} - \mathbf{x}_{0}) - \mathbf{v}_{\theta}(\mathbf{x}_{t} \mid t, \mathbf{z}) \right\|^{2} \right].
\end{equation}
Our experiments have demonstrated that $S$ decreases for patches generated later in the sequence, validating that strong contextual conditioning can effectively straighten the flow. The blue curve in Figure~\ref{fig:next_shortcut} (on the left) illustrates this path straightening effect.

\subsection{Multidimensional Conditioning}

Building on these observations, XYZFlow implements a dual-path conditioning architecture that enhances the probability flow along both \textbf{temporal} and \textbf{spatial} dimensions through Next Shortcut Prediction.

\textbf{Temporal Conditioning: Intra-patch Trajectory Conditioning}. To strengthen the conditioning along the temporal dimension, we propose to enhance the flow matching process for each patch by conditioning it on its own generation history. Specifically, for a patch $\mathbf{x}^p$, the conditioning signal at time $t$ is its entire state trajectory from the beginning of generation up to, but not including, the current time $t$. We denote this generation history as $\mathcal{H}_{t}^p = \{\mathbf{x}_\tau^p\}_{\tau=0}^{t-\Delta t}$. The temporal conditioning loss for the patch $\mathbf{x}^p$ is defined as the deviation of the predicted flow from the true conditional flow, given this patch's historical context $\mathcal{H}_{t}^p$. Therefore, the loss can be formulated as follows:
\begin{equation}
\footnotesize
\mathcal{L}_{\text{temp}}^p = \mathbb{E}_{t, \mathbf{x}_0^p, \mathbf{x}_1^p} \| v_\theta(\mathbf{x}_t^p | t, \mathcal{H}_{t}^p) - (\mathbf{x}_1^p - \mathbf{x}_0^p) \|^2,
\end{equation}
where this self-conditioning with self attention can act as a strong generative prior, stabilizing the generation path by providing a temporal coordinate system for the flow.

\begin{figure}[t!]
\centering
\includegraphics[width=1\linewidth]{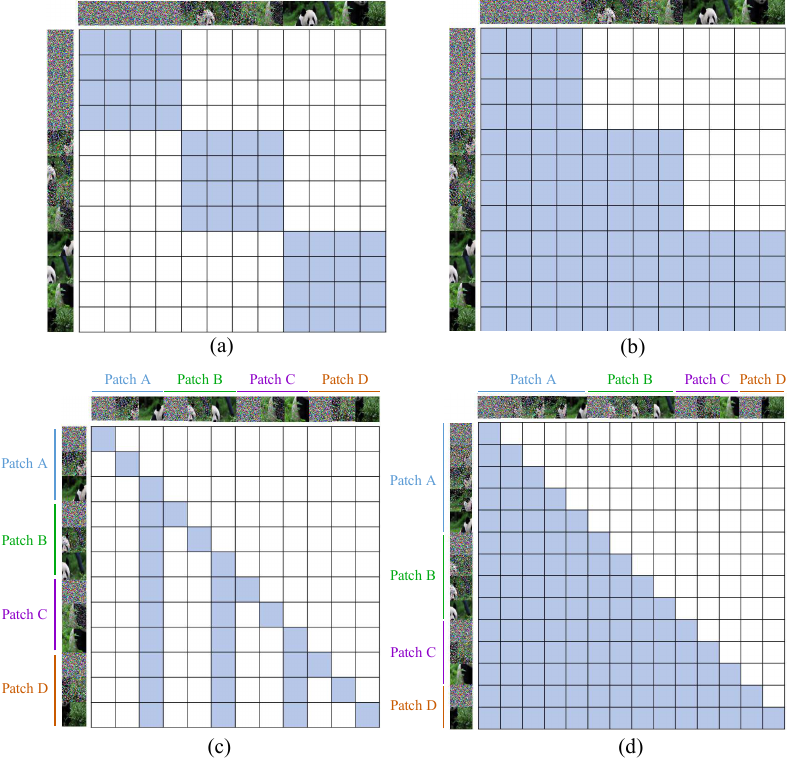}
\vspace{-4.25mm}
\caption{
Illustration of attention mechanisms for image generation. 
(a) Vanilla Image Generation: Standard full-image denoising with independent patch processing.
(b) Autoregressive in Denoising Dimension: Sequential denoising across patches over time.
(c) Next Patch Prediction: Complete denoising of one patch before starting the next.
(d) Next Shortcut Prediction: Early patches undergo more denoising steps, with full denoising trajectories of previous patches conditioning subsequent ones.
}
\label{fig:attention}
\vspace{-1.5mm}
\end{figure}

\textbf{Spatial Conditioning: Inter-patch Trajectory Conditioning}. 
In the spatial dimension, the generation of each patch is conditioned on the complete trajectories of all previously generated patches. Unlike intra-patch temporal conditioning, which operates within the history of a single patch, spatial conditioning captures dependencies across patches. As illustrated in Figure~\ref{fig:attention}, the key innovation is that each patch depends not only on the final content of preceding patches, but also on their full generation trajectories. This provides a substantially richer contextual signal, thereby enhancing flow expressivity across the spatial domain:
\begin{equation}
\footnotesize
p(\mathbf{x}^p | \mathbf{x}^1, \dots, \mathbf{x}^{p-1}) = p(\mathbf{x}^p | \mathcal{T}_{<p})
\end{equation}
where $\mathcal{T}_{<p} = \{\tau^1, \tau^2, \dots, \tau^{p-1}\}$ and $\tau^i = \{\mathbf{x}_t^i\}_{t=0}^1$ represents the complete generation trajectory of patch $i$. Conditioning on full trajectories $\mathcal{T}_{<p}$ rather than just final states provides significantly stronger constraints: each trajectory $\tau^i$ adds multiple temporal anchor points that collectively reduce the solution space for generating $\mathbf{x}^p$, making the reverse process more deterministic. The attention patterns shown in Figure~\ref{fig:attention} demonstrate how different attention mechanisms can effectively integrate trajectory condition.

\subsection{Next Shortcut Prediction}
The core contribution of XYZFlow is Next Shortcut Prediction, which shifts the scaling paradigm from resource-intensive scaling to constraint-intensive scaling. Rather than increasing model size or the number of training steps, we scale the dimensionality of the conditioning constraints by training the model to generate effectively under progressively stronger constraints from $\mathcal{T}_{<p}$.
As illustrated in Figure~\ref{fig:next_shortcut}, Next Shortcut Prediction trains the model to exploit rich contextual information for accelerating the generation process. Specifically, we define a progressive denoising schedule where each patch $p$ is assigned fewer denoising steps as it is generated later in the sequence:
\begin{equation}
\footnotesize
T(p) = T_{\text{full}} - \Delta T \cdot (p-1) \quad \text{for} \quad p = 1, \dots, P,
\end{equation}
with the constraint $T(p) \geq T_{\text{min}} > 0$. Our training objective is formulated as teacher model trajectory distillation. The key insight is to enhance the uniqueness and straighten the probability flow by imposing powerful, structured constraints. We achieve this with an autoregressive formulation:
\begin{equation}
\footnotesize
p(\mathbf{x}_0^p \mid \mathcal{T}_{<p}) = p_{\text{prior}}(\mathbf{x}_{T(p)}^p) \times \prod_{t=1}^{T(p)} p(\mathbf{x}_{t-1}^p \mid \mathbf{x}_{T(p):t}^p, \mathcal{T}_{<p}),
\label{eq:ours}
\end{equation}
where $\mathbf{x}_{T(p):t}^p = [\mathbf{x}_{T(p)}^p, \mathbf{x}_{T(p)-1}^p, \dots, \mathbf{x}_t^p]$ denotes the historical denoising trajectory. 
This formulation offers two fundamental advantages that embody our principle of intensive scaling. First, it equips each denoising step with a higher-dimensional coordinate system for conditioning. Specifically, the combination of spatial context from previously generated patches, $\mathcal{T}_{<p}$, and temporal context from the historical trajectory, $\mathbf{x}_{T(p):t}^p$, imposes a highly specific constraint.
This drastically reduces the variance of the reverse process, transforming the mapping from $\mathbf{x}_t^p$ to $\mathbf{x}_{t-1}^p$ from an ambiguous, one-to-many problem into a nearly deterministic, one-to-one function, thereby straightening the probability flow path. Second, it enables synergistic information fusion. To predict $\mathbf{x}_{t-1}^p$ at every step, the model learns to integrate both coarse-grained and fine-grained information. The recent denoised sample $\mathbf{x}_t^p$ is the best source for fine-grained details, while the historical trajectory closer to $\mathbf{x}_{T(p)}^p$ provides better coarse-grained structural information. We aim to estimate $p(\mathbf{x}_{t-1}^p \mid \mathbf{x}_{T(p):t}^p, \mathcal{T}_{<p})$, which, under our strong conditioning, approximates a Dirac delta distribution. This is achieved within the Flow Matching framework by defining the mapping function:
\begin{equation}
\footnotesize
\begin{aligned}
\mathbf{x}_{t-1}^p&=G(\mathbf{x}_{T(p):t}^p, \mathcal{T}_{<p}, t)\\
&:=\mathbf{x}_t^p + (\gamma(t-1) - \gamma(t)) \cdot v_\theta(\mathbf{x}_{T(p):t}^p, t, \mathcal{T}_{<p}),
\end{aligned}
\end{equation}
which is approximated by our student neural network $v_\theta$ using an Euler step. Here, $\gamma$ is the noise schedule. The complete training objective integrates multidimensional conditioning with this progressive schedule. It distills the teacher's trajectory by regressing the target sample:
\begin{equation}
\footnotesize
\mathcal{L}_{\text{NextShortcut}} \!=\! \mathbb{E}_{p \sim [1,P]}\! \left[ \sum_{t=1}^{T(p)} \left\| G_{\theta}(\mathbf{x}_{T(p):t}^p, t, \mathcal{T}_{<p}) - \mathbf{x}_{t-1}^p \right\|_2^2 \right].
\label{eq:our_loss}
\end{equation}
Here we have that $G_{\theta}(\mathbf{x}_{T(p):t}^p, t, \mathcal{T}_{<p}) = \mathbf{x}_t^p + (\gamma(t-1) - \gamma(t)) \cdot v_\theta(\mathbf{x}_{T(p):t}^p, t, \mathcal{T}_{<p})$ represents the student's one-step prediction. The transformer architecture allows computing $G_{\theta}$ for all $t$ simultaneously by using an attention mask. We design the attention mask to be block-wise causal, allowing the model to use the entire trajectory history $\mathbf{x}_{T(p):t}^p$ as context, which is the most flexible and effective option. This objective directly embodies our intensive scaling principle: it trains the student network to predict the optimal denoising path using both temporal (historical trajectory) and spatial (previous patches) conditioning. The green arrows in Figure~\ref{fig:next_shortcut} illustrate this accelerated generation path. Our framework can also benefit from an \textbf{additional discriminator loss} applied to the final generated patch $\hat{\mathbf{x}}_0^p$. This adversarial training, which uses real data as supervision, further enhances the high-frequency details in the generated outputs. During inference, the model generates the first patch with the full step budget $T(1) = T_{\text{full}}$ to establish a robust and reliable anchor. Each subsequent patch $p \geq 2$ is generated in an autoregressive manner. Starting from $\mathbf{x}_{T(p)}^p \sim p_{\text{prior}}$, at each step $t$, the model predicts $\hat{\mathbf{x}}_{t-1}^p = G_{\theta}(\hat{\mathbf{x}}_{T(p):t}^p, t, \mathcal{T}_{<p})$ based on the entire history of predictions $\hat{\mathbf{x}}_{T(p):t}^p = [\mathbf{x}_{T(p)}^p, \hat{\mathbf{x}}_{T(p)-1}^p, \dots, \hat{\mathbf{x}}_{t}^p]$. The information of the historical predictions is efficiently managed using a key-value cache. This process leverages the accumulated context $\mathcal{T}_{<p}$ and the learned ability to exploit constraints, achieving significant speedup while maintaining generation quality through constraint exploitation.

\section{Experiments and Results}
\label{sec:experiments}

We empirically validate the efficacy of XYZFlow, focusing on the theoretical claims in Section~\ref{sec:methodology}. Our experiments demonstrate that: (1) Multidimensional conditioning straightens the probability flow for subsequent patches, enabling better generation  quality; (2) The Next Shortcut Prediction objective effectively trains models to utilize accumulated context for accelerated generation by decreasing total denoising steps; (3) XYZFlow achieves promising efficiency-quality trade-offs in image synthesis.

\begin{table*}[t]
  \centering
  \scriptsize
  \vspace{-.25mm}
  \setlength{\abovecaptionskip}{7pt}
  \setlength{\belowcaptionskip}{5pt}
  \setlength{\tabcolsep}{6.7pt}
  \renewcommand{\arraystretch}{1.2}
    \begin{tabular}{ l | c  c c | c c  c c  c c}
        Model & \#params & AR steps & Diff steps & {FID$\downarrow$} & {IS$\uparrow$} & {Pre.$\uparrow$} & {Rec.$\uparrow$} & Time (s)$\downarrow$ & Speed-Up$\uparrow$ \\
        \shline
        
        \multicolumn{10}{c}{\emph{\textbf{Base Models (170M-208M parameters)}}} \\[0.5mm]

        MAR-B~\cite{mar} & 208M & 256 & 100 & 2.31 & 281.7 & 0.82 & 0.57 & 0.650 & 1.0$\times$ \\
        & & 64 & 50 & 2.39{\color{blue}{\tiny$\uparrow$0.08}} & 281.0{\color{blue}{\tiny$\downarrow$0.7}} & 0.82 & 0.57 & 0.134{\color{red}{\tiny$\downarrow$0.516}} & 4.9$\times${\color{red}{\tiny$\uparrow$3.9}} \\
        
        FlowAR-S~\cite{ren2024flowar} & 170M & 5 & 25 & 3.70{\color{blue}{\tiny$\uparrow$1.39}} & 235.1{\color{blue}{\tiny$\downarrow$46.6}} & 0.81{\color{blue}{\tiny$\downarrow$0.01}} & 0.51{\color{blue}{\tiny$\downarrow$0.06}} & 0.024{\color{red}{\tiny$\downarrow$0.626}} & 27.1$\times${\color{red}{\tiny$\uparrow$26.1}} \\
        
        xAR-B~\cite{xar} & 172M & 4 & 50 & 1.67{\color{red}{\tiny$\downarrow$0.64}} & 265.2{\color{blue}{\tiny$\downarrow$16.5}} & 0.80{\color{blue}{\tiny$\downarrow$0.02}} & 0.62{\color{red}{\tiny$\uparrow$0.05}} & 0.130{\color{blue}{\tiny$\uparrow$0.520}} & 5.0$\times${\color{red}{\tiny$\uparrow$4.0}} \\
        
        \rowcolor{gray!15} XYZFlow-B (w/o GAN) & 172M & 4 & 5$\rightarrow$2 & 2.02{\color{red}{\tiny$\downarrow$0.29}} & 261.1{\color{blue}{\tiny$\downarrow$20.6}} & 0.80{\color{blue}{\tiny$\downarrow$0.02}} & 0.58{\color{red}{\tiny$\uparrow$0.01}} & 0.018{\color{red}{\tiny$\downarrow$0.632}} & 36.1$\times${\color{red}{\tiny$\uparrow$35.1}} \\

        \rowcolor{gray!15} XYZFlow-B (w/ GAN) & 172M & 4 & 5$\rightarrow$2 & 1.63{\color{red}{\tiny$\downarrow$0.68}} & 268.5{\color{blue}{\tiny$\downarrow$13.2}} & 0.81{\color{blue}{\tiny$\downarrow$0.01}} & 0.62{\color{red}{\tiny$\uparrow$0.05}} & 0.018{\color{red}{\tiny$\downarrow$0.632}} & 36.1$\times${\color{red}{\tiny$\uparrow$35.1}} \\\hline
        
        \multicolumn{10}{c}{\emph{\textbf{Large Models (479M-820M parameters)}}} \\[0.5mm]

        DiT/XL-2~\cite{peebles2023scalable} & 676M & - & 25 & 2.89 & 230.2 & 0.80 & 0.57 & 0.494 & 1.0$\times$ \\

        DART & 812M & - & 16 & 5.62{\color{blue}{\tiny$\uparrow$2.73}} & 231.7{\color{blue}{\tiny$\uparrow$1.5}} & 0.78{\color{blue}{\tiny$\downarrow$0.02}} & 0.55{\color{blue}{\tiny$\downarrow$0.02}} & 0.075{\color{red}{\tiny$\downarrow$0.419}} & 6.6$\times${\color{red}{\tiny$\uparrow$5.6}} \\
        DART-AR~\cite{gu2024dartdenoisingautoregressivetransformer} & 812M & 4096 & - & 3.98{\color{blue}{\tiny$\uparrow$1.09}} & 256.8{\color{red}{\tiny$\uparrow$26.6}} & 0.80 & 0.58{\color{red}{\tiny$\uparrow$0.01}} & 7.44{\color{blue}{\tiny$\uparrow$6.946}} & 0.07$\times${\color{blue}{\tiny$\downarrow$0.93}} \\
        DART-FM & 820M & - & 16 & 3.82{\color{blue}{\tiny$\uparrow$0.93}} & 263.8{\color{red}{\tiny$\uparrow$33.6}} & 0.81{\color{red}{\tiny$\uparrow$0.01}} & 0.60{\color{red}{\tiny$\uparrow$0.03}} & 0.32{\color{blue}{\tiny$\uparrow$0.174}} & 1.5$\times${\color{red}{\tiny$\uparrow$0.5}} \\

        MeanFlow-XL/2 (w/o CFG) & 676M & - & 1 & 3.43{\color{blue}{\tiny$\uparrow$0.54}} & - & - & - & 0.009{\color{red}{\tiny$\downarrow$0.485}} & 54.9$\times${\color{red}{\tiny$\uparrow$53.9}} \\
        MeanFlow-XL/2 & 676M & - & 1 & 2.93{\color{blue}{\tiny$\uparrow$0.04}} & - & - & - & 0.018{\color{red}{\tiny$\downarrow$0.476}} & 27.4$\times${\color{red}{\tiny$\uparrow$26.4}} \\
        MeanFlow-XL/2+ & 676M & - & 1 & 2.20{\color{red}{\tiny$\downarrow$0.69}} & - & - & - & 0.018{\color{red}{\tiny$\downarrow$0.476}} & 27.4$\times${\color{red}{\tiny$\uparrow$26.4}} \\
        
        DART Distill & 676M & - & 2 & 10.92{\color{blue}{\tiny$\uparrow$8.03}} & 167.0{\color{blue}{\tiny$\downarrow$63.12}} & 0.68{\color{blue}{\tiny$\downarrow$0.12}} & 0.52{\color{blue}{\tiny$\downarrow$0.05}} & 0.033{\color{red}{\tiny$\downarrow$0.461}} & 15.0$\times${\color{red}{\tiny$\uparrow$14.0}} \\
        ARD (w/o GAN) & 676M & - & 2 & 6.29{\color{blue}{\tiny$\uparrow$3.40}} & 188.0{\color{blue}{\tiny$\downarrow$42.15}} & 0.74{\color{blue}{\tiny$\downarrow$0.06}} & 0.56{\color{blue}{\tiny$\downarrow$0.01}} & 0.034{\color{red}{\tiny$\downarrow$0.460}} & 14.5$\times${\color{red}{\tiny$\uparrow$13.5}} \\
        
        DART Distill (w/o GAN) & 676M & - & 4 & 10.25{\color{blue}{\tiny$\uparrow$7.36}} & 181.6{\color{blue}{\tiny$\downarrow$48.62}} & 0.70{\color{blue}{\tiny$\downarrow$0.10}} & 0.47{\color{blue}{\tiny$\downarrow$0.10}} & 0.065{\color{red}{\tiny$\downarrow$0.429}} & 7.6$\times${\color{red}{\tiny$\uparrow$6.6}} \\
        ARD (w/o GAN) & 676M & - & 4 & 4.32{\color{blue}{\tiny$\uparrow$1.43}} & 209.0{\color{blue}{\tiny$\downarrow$21.2}} & 0.77{\color{blue}{\tiny$\downarrow$0.03}} & 0.57 & 0.066{\color{red}{\tiny$\downarrow$0.428}} & 7.5$\times${\color{red}{\tiny$\uparrow$6.5}} \\
        
        DART Distill (w/ GAN) & 676M & - & 4 & 3.84{\color{blue}{\tiny$\uparrow$0.95}} & 221.1{\color{blue}{\tiny$\downarrow$9.1}} & 0.78{\color{blue}{\tiny$\downarrow$0.02}} & 0.55{\color{blue}{\tiny$\downarrow$0.02}} & 0.065{\color{red}{\tiny$\downarrow$0.429}} & 7.6$\times${\color{red}{\tiny$\uparrow$6.6}} \\
        ARD (w/ GAN) & 676M & - & 4 & 1.84{\color{red}{\tiny$\downarrow$1.05}} & 235.8{\color{blue}{\tiny$\downarrow$5.6}} & 0.80 & 0.62{\color{red}{\tiny$\uparrow$0.05}} & 0.066{\color{red}{\tiny$\downarrow$0.428}} & 7.5$\times${\color{red}{\tiny$\uparrow$6.5}} \\

        MAR-L~\cite{mar} & 479M & 256 & 100 & 1.78{\color{red}{\tiny$\downarrow$1.11}} & 296.0{\color{red}{\tiny$\uparrow$65.8}} & 0.81{\color{red}{\tiny$\uparrow$0.01}} & 0.60{\color{red}{\tiny$\uparrow$0.03}} & 1.102{\color{blue}{\tiny$\uparrow$0.608}} & 0.4$\times${\color{blue}{\tiny$\downarrow$0.6}} \\
        & & 64 & 50 & 1.86{\color{red}{\tiny$\downarrow$1.03}} & 294.0{\color{red}{\tiny$\uparrow$63.8}} & 0.80 & 0.61{\color{red}{\tiny$\uparrow$0.04}} & 0.250{\color{red}{\tiny$\downarrow$0.244}} & 2.0$\times${\color{red}{\tiny$\uparrow$1.0}} \\
        
        FlowAR-L~\cite{ren2024flowar} & 589M & 5 & 25 & 1.87{\color{red}{\tiny$\downarrow$1.02}} & 273.1{\color{red}{\tiny$\uparrow$42.9}} & 0.80 & 0.62{\color{red}{\tiny$\uparrow$0.05}} & 0.124{\color{red}{\tiny$\downarrow$0.370}} & 4.0$\times${\color{red}{\tiny$\uparrow$3.0}} \\
        
        xAR-L~\cite{xar} & 608M & 4 & 50 & 1.28{\color{red}{\tiny$\downarrow$1.61}} & 292.5{\color{red}{\tiny$\uparrow$62.3}} & 0.82{\color{red}{\tiny$\uparrow$0.02}} & 0.62{\color{red}{\tiny$\uparrow$0.05}} & 0.394{\color{blue}{\tiny$\uparrow$0.100}} & 1.3$\times${\color{red}{\tiny$\uparrow$0.3}} \\

        \rowcolor{gray!15} XYZFlow-L (w/o GAN) & 608M & 4 & 5$\rightarrow$2 & 1.79{\color{red}{\tiny$\downarrow$1.10}} & 265.2{\color{blue}{\tiny$\downarrow$35.0}} & 0.81{\color{red}{\tiny$\uparrow$0.01}} & 0.61{\color{red}{\tiny$\uparrow$0.04}} & 0.050{\color{red}{\tiny$\downarrow$0.444}} & 9.9$\times${\color{red}{\tiny$\uparrow$8.9}} \\
        \rowcolor{gray!15} XYZFlow-L (w/ GAN) & 608M & 4 & 5$\rightarrow$2 & 1.25{\color{red}{\tiny$\downarrow$1.64}} & 295.8{\color{red}{\tiny$\uparrow$65.6}} & 0.83{\color{red}{\tiny$\uparrow$0.03}} & 0.63{\color{red}{\tiny$\uparrow$0.06}} & 0.050{\color{red}{\tiny$\downarrow$0.444}} & 9.9$\times${\color{red}{\tiny$\uparrow$8.9}} \\\hline

        \multicolumn{10}{c}{\emph{\textbf{Huge Models (943M-2.0B parameters)}}} \\[0.5mm]
        
        FlowAR-H~\cite{ren2024flowar} & 1.9B & 5 & 50 & 1.67 & 276.3 & 0.80 & 0.62 & 0.423 & 1.0$\times$ \\

        VAR-d30~\cite{tian2025var} & 2.0B & 10 & - & 1.92{\color{blue}{\tiny$\uparrow$0.25}} & 323.1{\color{red}{\tiny$\uparrow$46.8}} & 0.82{\color{red}{\tiny$\uparrow$0.02}} & 0.59{\color{blue}{\tiny$\downarrow$0.03}} & 0.039{\color{red}{\tiny$\downarrow$0.384}} & 10.8$\times${\color{red}{\tiny$\uparrow$9.8}} \\

        MAR-H~\cite{mar} & 943M & 256 & 100 & 1.55{\color{red}{\tiny$\downarrow$0.12}} & 303.7{\color{red}{\tiny$\uparrow$27.4}} & 0.81{\color{red}{\tiny$\uparrow$0.01}} & 0.62 & 1.957{\color{blue}{\tiny$\uparrow$1.534}} & 0.2$\times${\color{blue}{\tiny$\downarrow$0.8}} \\
        & & 64 & 50 & 1.65{\color{red}{\tiny$\downarrow$0.02}} & 299.8{\color{red}{\tiny$\uparrow$23.5}} & 0.80 & 0.62 & 0.462{\color{blue}{\tiny$\uparrow$0.039}} & 0.9$\times${\color{blue}{\tiny$\downarrow$0.1}} \\
        
        xAR-H~\cite{xar} & 1.1B & 4 & 50 & 1.24{\color{red}{\tiny$\downarrow$0.43}} & 301.6{\color{red}{\tiny$\uparrow$25.3}} & 0.83{\color{red}{\tiny$\uparrow$0.03}} & 0.64{\color{red}{\tiny$\uparrow$0.02}} & 0.896{\color{blue}{\tiny$\uparrow$0.473}} & 0.5$\times${\color{blue}{\tiny$\downarrow$0.5}} \\
        
        \rowcolor{gray!15} XYZFlow-H (w/o GAN) & 1.1B & 4 & 5$\rightarrow$2 & 1.73{\color{red}{\tiny$\downarrow$0.06}} & 271.5{\color{blue}{\tiny$\downarrow$4.8}} & 0.82{\color{red}{\tiny$\uparrow$0.02}} & 0.62 & 0.105{\color{red}{\tiny$\downarrow$0.318}} & 4.0$\times${\color{red}{\tiny$\uparrow$3.0}} \\
        \rowcolor{gray!15} XYZFlow-H (w/ GAN) & 1.1B & 4 & 5$\rightarrow$2 & 1.22{\color{red}{\tiny$\downarrow$0.45}} & 304.2{\color{red}{\tiny$\uparrow$27.9}} & 0.84{\color{red}{\tiny$\uparrow$0.04}} & 0.64{\color{red}{\tiny$\uparrow$0.02}} & 0.105{\color{red}{\tiny$\downarrow$0.318}} & 4.0$\times${\color{red}{\tiny$\uparrow$3.0}} \\
    \end{tabular}
\caption{\textbf{Main results of different methods} on ImageNet 256$\times$256. Models are organized by parameter count from small to large. Colored numbers indicate performance change relative to baseline models.}
    \label{tab:comparison}
\vspace{-3mm}
\end{table*}

\subsection{Experimental Setup}
\label{subsec:imagenet_experiments}
We train and evaluate XYZFlow on the ImageNet 256×256 class-conditional generation benchmark ~\cite{deng2009imagenet}. Training is conducted on 8 NVIDIA H100 GPUs for 300K steps with a batch size of 128 and a learning rate of 0.0001. To better evaluate scalability, we employ three autoregressive teacher models of varying sizes, including Base (172M), Large (608M), and Huge (1.1B) ~\cite{xar}. ODE trajectory data is generated by running each teacher for 50 steps with a classifier-free guidance scale of 2.3, pre-computing 2.5M trajectories for distillation. We comprehensively evaluate sample quality using four established metrics: Fréchet Inception Distance (FID) ~\cite{fid}, Inception Score (IS) ~\cite{is}, and Precision/Recall ~\cite{dhariwal2021diffusion} to quantify fidelity and diversity. Inference time (shown in seconds) and speed-up relative to baseline models are reported to measure efficiency.

\subsection{Main Results and Analysis}
\label{subsec:qualitative}

Table~\ref{tab:comparison} presents a comprehensive system-level comparison on ImageNet 256$\times$256. XYZFlow achieves superior efficiency-quality trade-offs across all model scales, demonstrating the advantage of our spatio-temporal autoregressive framework. Our key innovation lies in a unified framework for spatio-temporal autoregression: (1) Temporally, the method models the denoising trajectory, making it straighter and more predictable; (2) Spatially, it decomposes the image into a sequence of patches and predicts each subsequent patch conditioned on previous patches and the learned temporal trajectory. The linearization of the temporal trajectory provides stable and simplified global context, which significantly eases the generation task for each individual patch. This enables the use of a lightweight network for fast inference. The results validate this design. Our base model, XYZFlow-B, with only 172M parameters, matches the inference speed of the 676M-parameter one-step model MeanFlow-XL/2+~\cite{geng2025mean} (both at 0.018s per image) while achieving superior FID (1.63 vs. 2.20). Furthermore, XYZFlow-B significantly outperforms the 820M-parameter DART-FM~\cite{gu2024dartdenoisingautoregressivetransformer} in both FID (1.63 vs. 3.82) and inference speed (0.018s vs. 0.32s). The consistent improvements across scales—XYZFlow-L (608M) and XYZFlow-H (1.1B) achieve FIDs of 1.25 and 1.22, respectively—demonstrate the scalability and effectiveness of our approach. Compared to teacher models, XYZFlow provides substantial speed-up while maintaining quality. It also outperforms other accelerated iterative methods. For instance, XYZFlow-B provides a 36.1$\times$ speed-up over its teacher with an FID of 1.63, whereas FlowAR-S provides a 27.1$\times$ speed-up with a higher FID of 3.70. This demonstrates that intensive modeling of the generative probability flow ("depth scaling") offers a viable and efficient alternative to simply enlarging model scale ("width scaling") or adding distillation steps. In summary, XYZFlow successfully straightens the generative trajectory through temporal conditioning, simplifies patch-wise generation via spatial decomposition, and leverages their synergy for efficient, high-fidelity image synthesis. While ARD and DART Distill variants demonstrate competitive performance at larger scales, they primarily serve to highlight the efficiency of our architectural design. For instance, ARD (w/ GAN, 676M) achieves a strong FID of 1.84 at 0.066s, and DART Distill (w/ GAN, 676M) attains an FID of 3.84 at 0.065s. However, their inference speeds are 3.6$\times$ slower than our 172M-parameter XYZFlow-B (0.018s) which also achieves a better FID of 1.63. This indicates that while these methods leverage distillation and GAN augmentation for quality improvements, their underlying autoregressive or iterative structures do not fundamentally accelerate inference. In contrast, our spatio-temporal autoregressive framework fundamentally rethinks the generative process, achieving superior speed and quality with a significantly more compact model.

\vspace{-0.57mm}
To further evaluate robustness under a weaker teacher, we additionally replace the xAR teacher with DiT/XL-2 (25-step) and perform patch-wise trajectory distillation under the same 256$\times$256 setting. The results in Table~\ref{tab:weak_teacher} show that standard 5-step distillation degrades sharply with this teacher, yielding FID 8.97 without GAN and 3.37 with GAN. In contrast, XYZFlow-B maintains substantially stronger performance: the progressive 5$\rightarrow$4$\rightarrow$3$\rightarrow$2 schedule achieves FID 3.85 without GAN, and further improves to 1.74 with GAN. Notably, the progressive schedule matches the constant 5$\rightarrow$5$\rightarrow$5$\rightarrow$5 variant (3.85 vs. 3.83) while requiring fewer total denoising steps, confirming that Next Shortcut Prediction preserves fidelity while improving efficiency. These results show that the gains of XYZFlow arise from the proposed spatio-temporal architecture itself rather than depending on a particularly strong teacher.

This trend is consistent across teachers of different strengths. As summarized in Table~\ref{tab:teacher_quality}, XYZFlow-L without GAN remains stable when distilling from xAR-B, xAR-L, and xAR-H, varying only from 1.91 to 1.77 FID. After GAN fine-tuning, the gap narrows further, reaching 1.32, 1.25, and 1.25, respectively. The largest gain appears for the weakest teacher, where GAN improves XYZFlow-L by 0.59 FID, while the stronger teachers show smaller but still consistent gains of 0.54 and 0.52. This result indicates that stronger teachers are helpful, but the proposed architecture preserves robust performance even when the teacher quality degrades.

\begin{table}[t]
\centering
  \scriptsize
  \setlength{\abovecaptionskip}{5pt}
  \setlength{\belowcaptionskip}{4pt}
  \setlength{\tabcolsep}{5.6pt}
  \renewcommand{\arraystretch}{1.25}
\begin{tabular}{lcccc}
\textbf{Method} & \textbf{Params} & \textbf{Steps} & \textbf{FID$\downarrow$} & \textbf{Total} \\
\shline
DiT/XL-2 (Teacher) & 676M & 25 & 2.89 & 25 \\
DiT/XL-2 Distilled & 676M & 5 & 8.97 & 5 \\
DiT/XL-2 Distilled (GAN) & 676M & 5 & 3.37{\color{red}{\tiny$\downarrow$5.60}} & 5 \\
XYZFlow-B (Constant) & 172M & 5$\rightarrow$5$\rightarrow$5$\rightarrow$5 & 3.83{\color{red}{\tiny$\downarrow$5.14}} & 20 \\
XYZFlow-B & 172M & 5$\rightarrow$4$\rightarrow$3$\rightarrow$2 & 3.85{\color{red}{\tiny$\downarrow$5.12}} & 14 \\
\rowcolor{gray!15} XYZFlow-B (GAN) & 172M & 5$\rightarrow$4$\rightarrow$3$\rightarrow$2 & \textbf{1.74{\color{red}{\tiny$\downarrow$1.63}}} & 14 \\
\end{tabular}
\caption{Weak-teacher distillation results on ImageNet 256$\times$256 using DiT/XL-2 (25-step) as the teacher.}
\label{tab:weak_teacher}
\vspace{-2mm}
\end{table}

\begin{table}[t]
\centering
  \scriptsize
  \setlength{\abovecaptionskip}{4pt}
  \setlength{\belowcaptionskip}{3pt}
  \setlength{\tabcolsep}{7pt}
  \renewcommand{\arraystretch}{1.25}
\begin{tabular}{lccc}
\textbf{Teacher} & \textbf{Teacher FID$\downarrow$} & \textbf{XYZFlow-L$\downarrow$} & \textbf{XYZFlow-L (GAN)$\downarrow$} \\
\shline
xAR-B & 1.61 & 1.91{\color{blue}{\tiny$\uparrow$0.30}} & 1.32{\color{red}{\tiny$\downarrow$0.59}} \\
xAR-L & 1.28 & 1.79{\color{blue}{\tiny$\uparrow$0.51}} & 1.25{\color{red}{\tiny$\downarrow$0.54}} \\
xAR-H & 1.24 & 1.77{\color{blue}{\tiny$\uparrow$0.53}} & \textbf{1.25{\color{red}{\tiny$\downarrow$0.52}}} \\
\end{tabular}
\caption{Comparison across xAR teachers on ImageNet 256$\times$256.}
\label{tab:teacher_quality}
\vspace{-5.5mm}
\end{table}

\begin{table*}[t]
\centering
  \scriptsize
  \setlength{\abovecaptionskip}{7pt}
  \setlength{\belowcaptionskip}{5pt}
  \setlength{\tabcolsep}{16pt}
  \renewcommand{\arraystretch}{1.29}
\begin{tabular}{lccccccc}
\textbf{Method} & \textbf{Params} & \textbf{Steps} & \textbf{FID$\downarrow$} & \textbf{IS$\uparrow$} & \textbf{Pre$\uparrow$} & \textbf{Rec$\uparrow$} & \textbf{Total} \\
\shline
Teacher-Base & 172M & 50 & 1.72 & 280.4 & 0.82 & 0.59 & 200 \\
Distilled-Base & 172M & 5 & 3.03 & 225.3 & 0.78 & 0.55 & 20 \\
+ Local History & 172M & 5$\rightarrow$2 & 2.25{\color{red}{\tiny$\downarrow$0.78}} & 249.8{\color{red}{\tiny$\uparrow$24.5}} & 0.78 & 0.54{\color{blue}{\tiny$\downarrow$0.01}} & 14 \\
- Full History & 172M & 5$\rightarrow$2 & 3.51{\color{blue}{\tiny$\uparrow$0.48}} & 219.9{\color{blue}{\tiny$\downarrow$5.4}} & 0.77{\color{blue}{\tiny$\downarrow$0.01}} & 0.52{\color{blue}{\tiny$\downarrow$0.03}} & 14 \\
- Shortcut & 172M & 5 & 2.05{\color{red}{\tiny$\downarrow$0.98}} & 258.5{\color{red}{\tiny$\uparrow$33.2}} & 0.80{\color{red}{\tiny$\uparrow$0.02}} & 0.58{\color{red}{\tiny$\uparrow$0.03}} & 20 \\\rowcolor{gray!15}
XYZFlow-B & 172M & 5$\rightarrow$2 & 2.02{\color{red}{\tiny$\downarrow$1.01}} & 261.1{\color{red}{\tiny$\uparrow$35.8}} & 0.80{\color{red}{\tiny$\uparrow$0.02}} & 0.58{\color{red}{\tiny$\uparrow$0.03}} & 14 \\
\rowcolor{gray!15}
XYZFlow-B (GAN) & 172M & 5$\rightarrow$2 & \textbf{1.63{\color{red}{\tiny$\downarrow$1.40}}} & \textbf{268.5{\color{red}{\tiny$\uparrow$43.2}}} & \textbf{0.81{\color{red}{\tiny$\uparrow$0.03}}} & \textbf{0.62{\color{red}{\tiny$\uparrow$0.07}}} & 14 \\
\hline
Teacher-Large & 608M & 50 & 1.28 & 292.5 & 0.82 & 0.62 & 200 \\
Distilled-Large & 608M & 5 & 2.85 & 235.1 & 0.79 & 0.57 & 20 \\
+ Local History & 608M & 5$\rightarrow$2 & 2.02{\color{red}{\tiny$\downarrow$0.83}} & 254.3{\color{red}{\tiny$\uparrow$19.2}} & 0.79 & 0.57 & 14 \\
- Full History & 608M & 5$\rightarrow$2 & 3.35{\color{blue}{\tiny$\uparrow$0.50}} & 229.3{\color{blue}{\tiny$\downarrow$5.8}} & 0.78{\color{blue}{\tiny$\downarrow$0.01}} & 0.54{\color{blue}{\tiny$\downarrow$0.03}} & 14 \\
- Shortcut & 608M & 5 & 1.82{\color{red}{\tiny$\downarrow$1.03}} & 263.8{\color{red}{\tiny$\uparrow$28.7}} & 0.81{\color{red}{\tiny$\uparrow$0.02}} & 0.61{\color{red}{\tiny$\uparrow$0.04}} & 20 \\\rowcolor{gray!15}
XYZFlow-L & 608M & 5$\rightarrow$2 & 1.79{\color{red}{\tiny$\downarrow$1.06}} & 265.2{\color{red}{\tiny$\uparrow$30.1}} & 0.81{\color{red}{\tiny$\uparrow$0.02}} & 0.61{\color{red}{\tiny$\uparrow$0.04}} & 14 \\
\rowcolor{gray!15}
XYZFlow-L (GAN) & 608M & 5$\rightarrow$2 & \textbf{1.25{\color{red}{\tiny$\downarrow$1.60}}} & \textbf{295.8{\color{red}{\tiny$\uparrow$60.7}}} & \textbf{0.83{\color{red}{\tiny$\uparrow$0.04}}} & \textbf{0.63{\color{red}{\tiny$\uparrow$0.06}}} & 14 \\
\hline
Teacher-Huge & 1.1B & 50 & 1.24 & 301.6 & 0.83 & 0.64 & 200 \\
Distilled-Huge & 1.1B & 5 & 2.75 & 240.8 & 0.80 & 0.59 & 20 \\
+ Local History & 1.1B & 5$\rightarrow$2 & 1.96{\color{red}{\tiny$\downarrow$0.79}} & 259.1{\color{red}{\tiny$\uparrow$18.3}} & 0.80 & 0.57{\color{blue}{\tiny$\downarrow$0.02}} & 14 \\
- Full History & 1.1B & 5$\rightarrow$2 & 3.25{\color{blue}{\tiny$\uparrow$0.50}} & 234.6{\color{blue}{\tiny$\downarrow$6.2}} & 0.79{\color{blue}{\tiny$\downarrow$0.01}} & 0.56{\color{blue}{\tiny$\downarrow$0.03}} & 14 \\
- Shortcut & 1.1B & 5 & 1.76{\color{red}{\tiny$\downarrow$0.99}} & 268.2{\color{red}{\tiny$\uparrow$27.4}} & 0.82{\color{red}{\tiny$\uparrow$0.02}} & 0.61{\color{red}{\tiny$\uparrow$0.02}} & 20 \\\rowcolor{gray!15}
XYZFlow-H & 1.1B & 5$\rightarrow$2 & 1.73{\color{red}{\tiny$\downarrow$1.02}} & 271.5{\color{red}{\tiny$\uparrow$30.7}} & 0.82{\color{red}{\tiny$\uparrow$0.02}} & 0.62{\color{red}{\tiny$\uparrow$0.03}} & 14 \\
\rowcolor{gray!15}
XYZFlow-H (GAN) & 1.1B & 5$\rightarrow$2 & \textbf{1.22{\color{red}{\tiny$\downarrow$1.53}}} & \textbf{304.2{\color{red}{\tiny$\uparrow$63.4}}} & \textbf{0.84{\color{red}{\tiny$\uparrow$0.04}}} & \textbf{0.64{\color{red}{\tiny$\uparrow$0.05}}} & 14 \\
\end{tabular}
\caption{Ablation study of XYZFlow components. FID$\downarrow$ is lower-better; IS$\uparrow$, Pre$\uparrow$, Rec$\uparrow$ are higher-better. Total Steps$\downarrow$ represents the cumulative inference steps. Colored numbers indicate performance change relative to baseline (Distilled) models.In the table, '+' and '-' denote the baseline model with and without the corresponding component, respectively.}
\label{tab:ablation}
\vspace{-3mm}
\end{table*}

\vspace{-0.3mm}
\subsection{Ablation Study}
\label{subsec:ablation_study}
\vspace{-0.25mm}

\textbf{Component Importance Analysis}.
Table~\ref{tab:ablation} presents a systematic evaluation of XYZFlow's core components. Our analysis reveals the distinct contributions of each component through controlled ablations: (1) \textbf{Full History Guidance} emerges as the most critical component. Removing full history guidance ($\mathcal{T}_{<p}$) causes FID to degrade by approximately 0.5 across all model sizes (\eg, 2.02$\rightarrow$3.51 for Base). These results show that inter-patch trajectory conditioning is essential for maintaining generation quality, and also validate our hypothesis that complete trajectory information can provide richer contextual signals than final patch content alone. (2) \textbf{Shortcut Prediction} brings an interesting advantage in denoising efficiency. While having minimal impact on final quality (\eg, FID differences $<$ 0.03), it provides substantial acceleration benefits. We can observe that the ``- Shortcut'' variant maintains similar FID scores but requires 20 total steps compared to XYZFlow's 14 steps. This result confirms that Shortcut Prediction primarily improves efficiency rather than quality, consistent with its design goal of exploiting straightened trajectories for faster convergence. (3) \textbf{Local History Conditioning} ($\mathcal{H}_{t}^p$) contributes moderately to performance, with its removal causing FID degradation of approximately 0.2-0.3. This result suggests that while intra-patch temporal conditioning provides useful stabilization, the inter-patch spatial conditioning plays a more significant role in the overall framework. (4) \textbf{Adversarial loss} consistently improves all metrics across different model sizes, which demonstrates that XYZFlow's straightened paths provide a favorable foundation for adversarial training on real data, enabling the student model to potentially exceed the teacher's capabilities.

\begin{table}[t!]
\centering
  \scriptsize
  \setlength{\abovecaptionskip}{1pt}
  \setlength{\belowcaptionskip}{3pt}
  \setlength{\tabcolsep}{14pt}
  \renewcommand{\arraystretch}{1.24}
\begin{tabular}{cccc}
\textbf{Cell Size} & \textbf{Grid Size} & \textbf{FID$\downarrow$} & \textbf{IS$\uparrow$} \\
\shline
$1 \times 1$ & $16 \times 16$ & 2.85 & 280.7 \\
$2 \times 2$ & $8 \times 8$ & 2.46 & 283.9 \\
$4 \times 4$ & $4 \times 4$ & 2.11 & 289.2 \\
\rowcolor{gray!15} $\mathbf{8 \times 8}$ & $\mathbf{2 \times 2}$ & $\mathbf{2.02}$ & $\mathbf{301.5}$ \\
$16 \times 16$ & $1 \times 1$ & 2.55 & 283.8 \\
\end{tabular}
\caption{Ablation on the cell size ($k \times k$ tokens).}
\label{tab:cell_size}
\vspace{-6.5mm}
\end{table}

\vspace{-0.3mm}
\textbf{Ablation Study on Cell Size}.
A prediction cell is formed by grouping a cluster of $k \times k$ spatially adjacent tokens. Using a larger cell size incorporates more tokens in a single prediction step, thereby capturing broader contextual information. For a $256 \times 256$ input image, the encoded continuous latent representation has a spatial resolution of $16 \times 16$. Consequently, the image can be partitioned into an $m \times m$ grid, where each cell consists of $k \times k$ neighboring tokens. Under the XYZFlow framework, we employ a unified denoising schedule of 5 steps per patch and use the Base Model configuration. As shown in Table~\ref{tab:cell_size}, we evaluate different cell sizes with $k \in \{1, 2, 4, 8, 16\}$. Here, $k = 1$ corresponds to a single token, while $k = 16$ represents the entire image as a single entity. Performance improves as $k$ increases, reaching its peak with an FID of 2.02 at a cell size of $8 \times 8$ (\ie, $k = 8$). Beyond this point, performance declines, yielding an FID of 2.55 when the whole image is treated as one entity. These results indicate that using cells as the prediction unit, rather than the entire image, allows the model to explicitly condition on previously generated context. This conditioning enhances prediction confidence while preserving both rich semantic information and fine-grained local details.

\vspace{1.05mm}

\textbf{Analysis of Next Shortcut Prediction}.
Our ablation study of shortcut prediction strategies, summarized in Table~\ref{tab:shortcut_ablation_base}, yields four key insights that support our design choices. (1) \textbf{Progressive denoising step reduction achieves optimal efficiency-quality trade-off}. Our proposed 5$\rightarrow$4$\rightarrow$3$\rightarrow$2 strategy achieves nearly identical quality to the constant 5-step approach (FID 1.63 vs. 1.63) but with 30\% fewer total steps (14 vs. 20), demonstrating that sampling can be accelerated more aggressively in later stages without compromising quality. (2) \textbf{Initial step configuration is critical}. The comparable performance of constant strategies (5$\rightarrow$5$\rightarrow$5$\rightarrow$5 vs. 4$\rightarrow$4$\rightarrow$4$\rightarrow$4) highlights that an initial step of T(1)=5 (a divisor of the teacher's 50-step trajectory) provides the optimal starting point for effective distillation. (3) \textbf{Aggressive reduction can harm generation diversity}. The 5$\rightarrow$2$\rightarrow$2$\rightarrow$2 strategy shows degraded recall (0.58 vs. 0.62), confirming that overly aggressive step reduction compromises sample diversity, while our gradual approach better preserves solution space coverage. (4) \textbf{Computational cost must be balanced}. Although the 8$\rightarrow$8$\rightarrow$8$\rightarrow$8 strategy achieves the lowest FID (1.62), it still requires 32 steps in total, which is over twice of our method's cost. This validates our focus on optimal efficiency-quality trade-offs rather than purely maximizing generation quality.

\begin{figure*}[t!]
    \centering
    \includegraphics[width=1\linewidth]{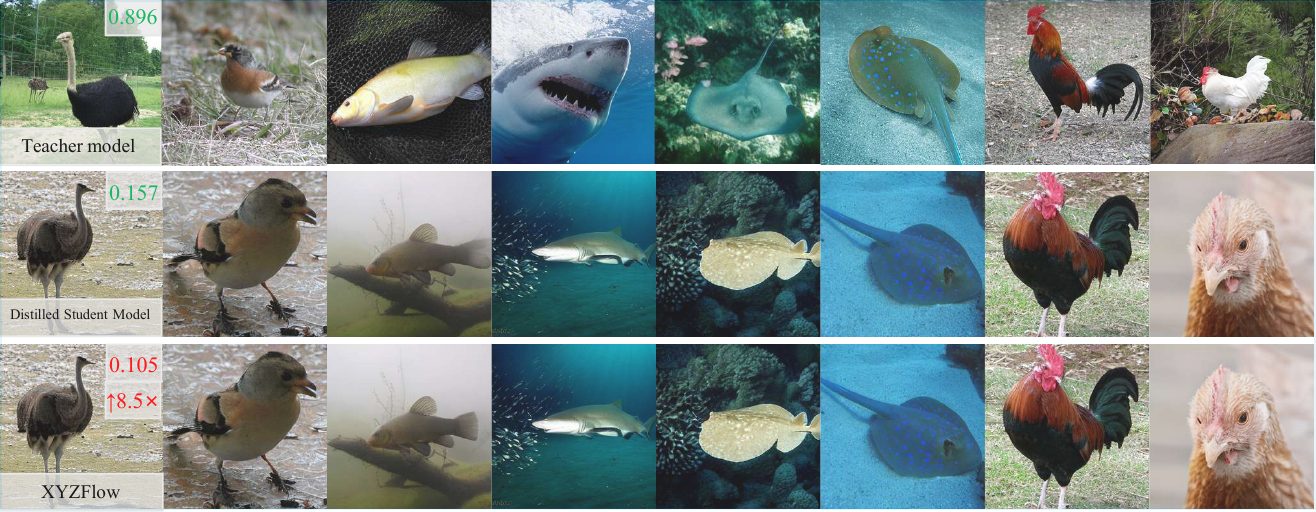}
    \vspace{-4mm}
    \caption{Visual comparison demonstrating the efficiency of XYZFlow. Our 1.1B-parameter model achieves an \textbf{8.5$\times$ faster} generation time than the teacher model and an additional \textbf{1.5$\times$ speedup} over the base student distillation, with no perceptible loss in quality.}
    \label{fig:example_comp}
    \vspace{3mm}
\end{figure*}

\begin{table*}[t!]
\centering
  \scriptsize
  \setlength{\abovecaptionskip}{8pt}
  \setlength{\belowcaptionskip}{5pt}
  \setlength{\tabcolsep}{21pt}
  \renewcommand{\arraystretch}{1.29}
\begin{tabular}{lccccc}
\textbf{Schedule \(T(p)\)} & \textbf{FID\(\downarrow\)} & \textbf{IS\(\uparrow\)} & \textbf{Pre\(\uparrow\)} & \textbf{Rec\(\uparrow\)} & \textbf{Total Steps\(\downarrow\)} \\
\shline
Teacher (50 steps) & 1.72 & 280.4 & 0.82 & 0.59 & 200 \\
\rowcolor{gray!15}
5$\rightarrow$4$\rightarrow$3$\rightarrow$2 (Ours) & \underline{1.63}{\color{red}{\tiny\(\downarrow\)0.09}} & \underline{268.5}{\color{blue}{\tiny\(\downarrow\)11.9}} & \textbf{0.81}{\color{blue}{\tiny\(\downarrow\)0.01}} & \textbf{0.62}{\color{red}{\tiny\(\uparrow\)0.03}} & 14{\color{red}{\tiny\(\downarrow\)186}} \\
8$\rightarrow$4$\rightarrow$2$\rightarrow$1 (Uniform) & 1.75{\color{blue}{\tiny\(\uparrow\)0.03}} & 255.2{\color{blue}{\tiny\(\downarrow\)25.2}} & 0.77{\color{blue}{\tiny\(\downarrow\)0.05}} & 0.57{\color{blue}{\tiny\(\downarrow\)0.02}} & 15{\color{red}{\tiny\(\downarrow\)185}} \\
4$\rightarrow$4$\rightarrow$4$\rightarrow$4 & 1.84{\color{blue}{\tiny\(\uparrow\)0.12}} & 248.9{\color{blue}{\tiny\(\downarrow\)31.5}} & 0.74{\color{blue}{\tiny\(\downarrow\)0.08}} & 0.55{\color{blue}{\tiny\(\downarrow\)0.04}} & 16{\color{red}{\tiny\(\downarrow\)184}} \\
4$\rightarrow$3$\rightarrow$2$\rightarrow$1 & 1.88{\color{blue}{\tiny\(\uparrow\)0.16}} & 245.3{\color{blue}{\tiny\(\downarrow\)35.1}} & 0.73{\color{blue}{\tiny\(\downarrow\)0.09}} & 0.54{\color{blue}{\tiny\(\downarrow\)0.05}} & 10{\color{red}{\tiny\(\downarrow\)190}} \\
8$\rightarrow$8$\rightarrow$8$\rightarrow$8 & \textbf{1.61}{\color{red}{\tiny\(\downarrow\)0.11}} & \textbf{269.0}{\color{blue}{\tiny\(\downarrow\)11.4}} & \textbf{0.81}{\color{blue}{\tiny\(\downarrow\)0.01}} & \textbf{0.62}{\color{red}{\tiny\(\uparrow\)0.03}} & 32{\color{red}{\tiny\(\downarrow\)168}} \\
5$\rightarrow$5$\rightarrow$5$\rightarrow$5 (Constant) & \underline{1.63}{\color{red}{\tiny\(\downarrow\)0.09}} & 267.9{\color{blue}{\tiny\(\downarrow\)12.5}} & \textbf{0.81}{\color{blue}{\tiny\(\downarrow\)0.01}} & \underline{0.61}{\color{red}{\tiny\(\uparrow\)0.02}} & 20{\color{red}{\tiny\(\downarrow\)180}} \\
5$\rightarrow$4$\rightarrow$4$\rightarrow$2 & 1.64{\color{red}{\tiny\(\downarrow\)0.08}} & 266.2{\color{blue}{\tiny\(\downarrow\)14.2}} & 0.80{\color{blue}{\tiny\(\downarrow\)0.02}} & 0.60{\color{red}{\tiny\(\uparrow\)0.01}} & 15{\color{red}{\tiny\(\downarrow\)185}} \\
5$\rightarrow$2$\rightarrow$2$\rightarrow$2 (Aggressive) & 1.70{\color{red}{\tiny\(\downarrow\)0.02}} & 258.6{\color{blue}{\tiny\(\downarrow\)21.8}} & 0.78{\color{blue}{\tiny\(\downarrow\)0.04}} & 0.58{\color{blue}{\tiny\(\downarrow\)0.01}} & 11{\color{red}{\tiny\(\downarrow\)189}} \\
\end{tabular}
\caption{Ablation study of Next Shortcut Prediction strategies on Base Model (172M). FID$\downarrow$ is lower-better; IS$\uparrow$, Pre$\uparrow$, Rec$\uparrow$ are higher-better. Total Steps$\downarrow$ represents the cumulative inference steps. Colored numbers indicate performance change relative to baseline (50 steps). \textbf{Bold} indicates best performance, \underline{underline} indicates second best.}
\label{tab:shortcut_ablation_base}
\vspace{-2mm}
\end{table*}

\vspace{-.5mm}
\section{Concluding Remarks}
\label{sec:conclusion}
\vspace{-.25mm}
In this work, we propose \textbf{\XYZFlow}, a spatio-temporal generative framework for improving few-step generation through historical state conditioning and Next Shortcut Prediction. By coupling temporal trajectory conditioning with patch-wise autoregressive generation, XYZFLow imposes richer structure on the denoising process, making its evolution more predictable and easier to approximate with a small number of steps. This design yields a strong efficiency-quality trade-off that remains robust even under weaker teachers and transfers consistently across teachers of varying strengths. More broadly, our results suggest that scaling the \textit{dimensionality of constraints}, rather than relying solely on larger models or more distillation steps, offers a promising direction for efficient, high-fidelity generative modeling.

\vspace{-1.85mm}
\section*{Impact Statement}
\vspace{-.5mm}

\label{sec:Impact Statement}

This work introduces XYZFlow, a framework for improving generative modeling efficiency by scaling the expressivity of probability flows through constraint dimensionality rather than model size. The main impact is methodological, as it offers a principled direction for high-fidelity generation with lower computational cost, advancing the understanding and practical design of efficient generative models.

\vspace{-0.6mm}

Broader societal impacts are not the primary focus of this work. As with generative modeling methods generally, downstream risks depend on application context, including potential misuse in synthetic media or privacy-sensitive data generation. Responsible evaluation and deployment should therefore be considered for specific applications.

\bibliography{example_paper}
\bibliographystyle{icml2026}

\clearpage

\newpage
\appendix
\onecolumn
\addcontentsline{toc}{section}{Appendix}
\renewcommand \thepart{}
\renewcommand \partname{}
\part{\Large{\centerline{Appendix}}}
\parttoc

\newpage

\section{Experiment Results}
\label{SectionA}
\paragraph{Student Model Training Configuration}
We adhere to the teacher configuration for student training, with the exception of gradient clipping and batch size. The training configuration for the 5-step student model using regression loss is as follows: a learning rate of $10^{-4}$, weight decay of 0.0, gradient clipping of 1.0, batch size of 64, 300k training iterations, and an EMA decay rate of 0.9999. The student model is initialized with the teacher's weights. Training is performed on 8 NVIDIA H100 GPUs and requires approximately 2 days to complete 300k iterations. According to our convergence analysis, the FID metric exhibits stable convergence within the first 100k iterations (equivalent to roughly 16 hours of training). The same configuration is applied to the baseline (step distillation) as well.

\paragraph{Discriminator Loss Configuration} 
When incorporating an additional discriminator loss, we use the teacher network as a feature extractor and train only the discriminator heads attached to the features extracted from each transformer block. The discriminator heads predict logits on a per-token basis. We employ hinge loss and adopt the discriminator head architecture proposed in the same work. The discriminator is trained using the student model's final prediction and real data. It is trained with a learning rate of $1\times10^{-3}$ and no weight decay. Adaptive balancing is applied between the regression loss and the discriminator loss. A batch size of 48 is used for both the student model and the discriminator.

\paragraph{Adversarial Fine-tuning Procedure} By adding the discriminator loss and further fine-tuning a student model that was pre-trained with regression loss, we observe significant performance gains. The adversarial training component consistently improves all metrics across different model sizes. The fine-tuning process is conducted for 40k iterations, during which both the student generator and the discriminator are jointly optimized with adaptive loss balancing.

\paragraph{Performance Improvement Results} As shown in our ablation studies (Table 2), the adversarial fine-tuning yields substantial improvements: the Base model's FID improves from 2.02 to 1.63, the Large model from 1.79 to 1.25, and the Huge model from 1.73 to 1.22. These results demonstrate the effectiveness of incorporating adversarial training into the distillation framework, with consistent enhancements observed across all model scales.

\section{Generated Samples}
Figure~\ref{fig:qualitative} presents samples generated by xAR (trained on ImageNet $256\times256$). These results collectively validate XYZFlow's multidimensional conditioning approach in maintaining straighter trajectories and delivering consistent speed-up advantages while preserving sample quality across all model scales and highlight XYZFlow’s ability to generate images with exceptional visual quality.
\begin{figure*}[ht]
    \centering
    \includegraphics[width=1\linewidth]{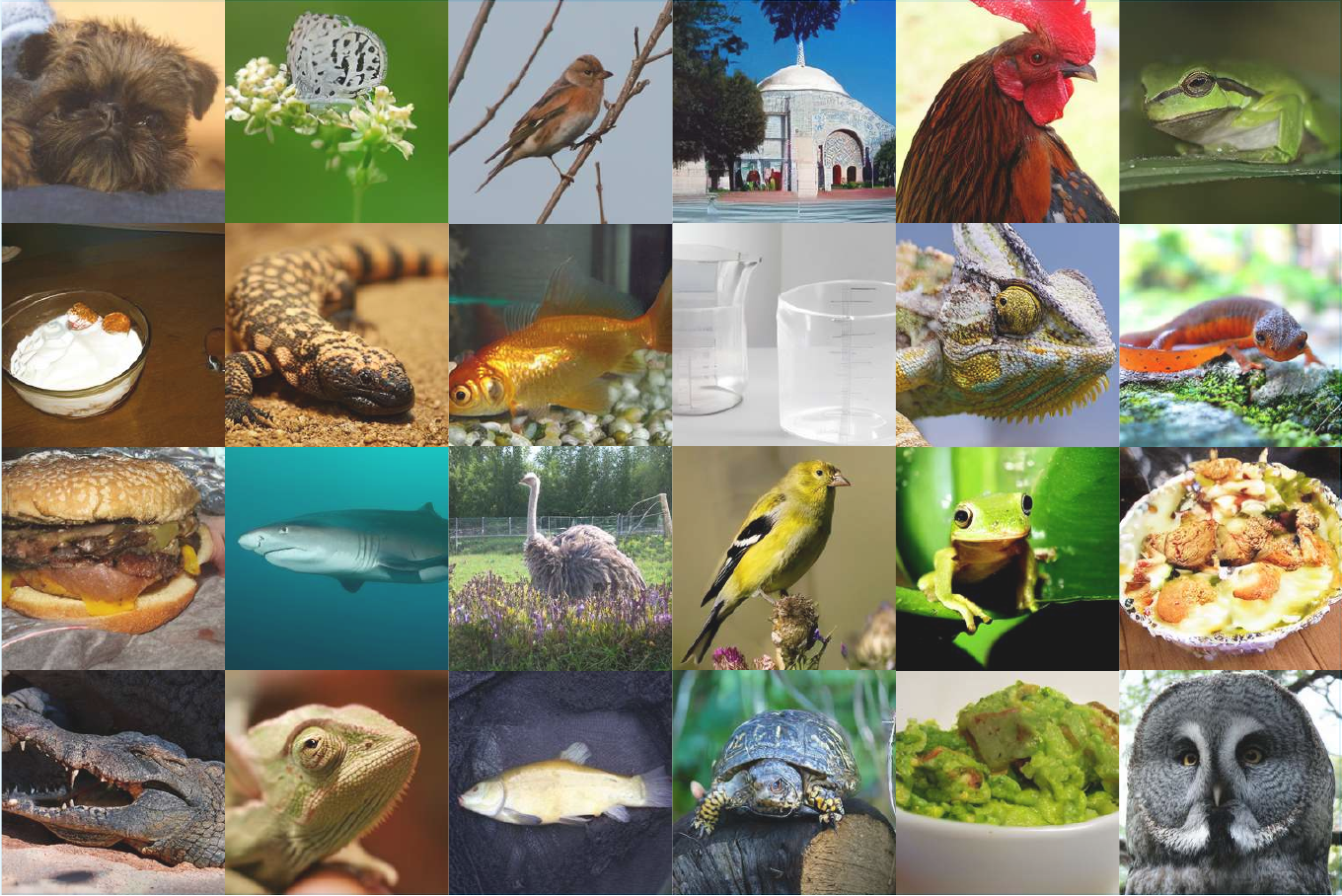}
    \caption{Randomly selected examples of generated images from XYZFlow. XYZFlow shows high-quality generative modeling abilities.}
    \label{fig:qualitative}
\end{figure*}

\section{Theoretical Proofs of Multi-Dimensional Conditional Enhancement}
\label{app:theoretical_proofs}

\subsection{Information-Theoretic Foundation of Conditional Modeling}

\begin{definition}[Conditional Entropy Reduction]
Let target distribution be $p(\vx)$ where $\vx \in \mathbb{R}^d$ is a high-dimensional random variable, and conditioning variable be $\vc$. The conditional distribution $p(\vx|\vc)$ has lower entropy than the unconditional distribution $p(\vx)$ under the following conditions:
\begin{enumerate}
    \item $\vx$ and $\vc$ are not independent: $I(\vx;\vc) > 0$
    \item The conditional distribution is well-defined and has finite second moments
    \item The dimensionality $d$ is sufficiently large for concentration effects
\end{enumerate}
\end{definition}

\begin{theorem}[Quantified Conditional Entropy Inequality]
\label{thm:quantified_conditional_entropy}
For any distribution $p(\vx)$ with finite covariance matrix $\Sigma_{\vx}$, the conditional entropy satisfies:
\begin{equation}
H(\vx|\vc) \leq H(\vx) - \frac{1}{2}\log\left(1 + \frac{I(\vx;\vc)}{\lambda_{\min}(\Sigma_{\vx})}\right)
\end{equation}
where $\lambda_{\min}(\Sigma_{\vx})$ is the minimum eigenvalue of the covariance matrix, and the bound holds for distributions beyond exponential families.
\end{theorem}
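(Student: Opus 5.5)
The plan is to reduce the statement to a scalar inequality, using the exact chain-rule identity
\begin{equation}
H(\vx\mid\vc) = H(\vx) - I(\vx;\vc),
\end{equation}
which holds whenever $H(\vx)$ is finite. Finite covariance $\Sigma_{\vx}$ bounds $H(\vx)$ above by the Gaussian entropy $\tfrac12\log\big((2\pi e)^d\det\Sigma_{\vx}\big)$; finiteness of $H(\vx)$ itself is an assumption I will add. Nothing about exponential families is used, which explains the closing clause of Theorem~\ref{thm:quantified_conditional_entropy}. Substituting the identity, the claimed bound is equivalent to
\begin{equation}
I(\vx;\vc) \;\geq\; \tfrac12\log\Big(1+\frac{I(\vx;\vc)}{\lambda_{\min}(\Sigma_{\vx})}\Big).
\end{equation}
So the whole proof rests on this single comparison between $I$ and $\phi(I):=\tfrac12\log(1+I/\lambda_{\min})$.

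Next I would compare the two sides using $\log(1+u)\le u$ for $u\ge 0$, which gives $\phi(I)\le I/(2\lambda_{\min})$. If $\lambda_{\min}(\Sigma_{\vx})\ge \tfrac12$, this is at most $I$ and the theorem follows immediately. The edge cases are easy. When $I=0$ both sides agree, consistent with condition 1 of the preceding Definition being needed only for strictness. When $I=\infty$, $H(\vx\mid\vc)=-\infty$ or the statement is vacuous.

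The main obstacle is that the scalar inequality is \emph{not} true without a lower bound on $\lambda_{\min}$. Since $\phi$ is concave with $\phi'(0)=1/(2\lambda_{\min})$, for $\lambda_{\min}<\tfrac12$ and small $I>0$ we get $\phi(I)>I$. For instance, $I=0.01$ and $\lambda_{\min}=10^{-6}$ give $\phi(I)\approx 4.6$. Such configurations are realizable by a jointly Gaussian pair with one nearly degenerate direction. The bound is also not invariant under rescaling $\vx\mapsto a\vx$: the left side shifts by $d\log|a|$ on both $H$ terms while $I$ is unchanged, but $\lambda_{\min}$ scales by $a^2$.

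I would therefore state and prove the result under an explicit normalization, either $\lambda_{\min}(\Sigma_{\vx})\ge\tfrac12$ or, more naturally, after replacing $\lambda_{\min}$ by $\max(\lambda_{\min},\tfrac12)$. As an alternative I would record the sharper sufficient condition $I\ge\phi(I)$, which for fixed $\lambda_{\min}<\tfrac12$ holds exactly when $I$ exceeds the unique positive root $I^\ast(\lambda_{\min})$ of $2I=\log(1+I/\lambda_{\min})$. I would include the Gaussian counterexample to show the normalization cannot be dropped.
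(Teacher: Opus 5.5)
Your analysis is correct, and it differs from the paper's proof only at the step where that proof fails. The paper makes the same first move, $H(\vx|\vc)=H(\vx)-I(\vx;\vc)$. It then asserts $I(\vx;\vc)\ge\tfrac12\log\bigl(1+I(\vx;\vc)/\mathrm{Var}[\vx]\bigr)$, justified only by the label ``EPI for general distributions''. Next it swaps $\mathrm{Var}[\vx]$ for $\lambda_{\min}(\Sigma_{\vx})$ via an unspecified ``covariance matrix characterization'' and appeals to Cram\'er--Rao. That asserted inequality is exactly the scalar comparison you show to be false.

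Your numerical instance is realizable: take scalar $x\sim\mathcal{N}(0,10^{-6})$ and $c=x+n$ with independent Gaussian noise tuned so that $\tfrac12\log(1+\mathrm{SNR})=0.01$. Your scaling observation is the decisive point. Under $\vx\mapsto a\vx$, the gap $H(\vx)-H(\vx|\vc)=I(\vx;\vc)$ is unchanged while $\lambda_{\min}\mapsto a^2\lambda_{\min}$. So every pair with $0<I(\vx;\vc)<\infty$ and finite $H(\vx)$ violates the bound once $a^2\lambda_{\min}<I/(e^{2I}-1)$. The theorem is therefore false as stated, and neither the entropy power inequality, de Bruijn's identity, nor Cram\'er--Rao can repair it.

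The paper's passage from $\mathrm{Var}[\vx]$ to $\lambda_{\min}$ also runs the wrong way. Since $\lambda_{\min}$ is at most any directional variance, the $\lambda_{\min}$ form is the \emph{stronger} claim. It would not follow even if the variance version held.

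Your normalized version, with $\max(\lambda_{\min},\tfrac12)$ in the denominator, is a valid salvage. Be aware, though, that it is weaker than the identity you start from. Indeed, $H(\vx|\vc)=H(\vx)-I(\vx;\vc)$ is already exact, and the logarithmic term is strictly smaller than $I$ whenever $I>0$. Any true bound of this shape carries no information beyond the chain rule.
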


\begin{proof}
By the entropy power inequality and the De Bruijn identity:
\begin{align}
H(\vx|\vc) &= H(\vx) - I(\vx;\vc) \\
&\leq H(\vx) - \frac{1}{2}\log\left(1 + \frac{I(\vx;\vc)}{\text{Var}[\vx]}\right) \quad \text{(EPI for general distributions)}
\end{align}
For high-dimensional cases, we use the covariance matrix characterization. The mutual information lower bound comes from the Cramér-Rao bound applied to the estimation of $\vx$ given $\vc$.
\end{proof}

\subsection{Theoretical Proof of Denoising-Dimension Conditional Enhancement}

\subsubsection{Autoregressive Trajectory as Condition}

\begin{assumption}[Diffusion Process Regularity]
The diffusion process satisfies:
\begin{enumerate}
    \item \textbf{Smoothness}: The score function $\nabla_{\vx} \log p_t(\vx)$ is L-Lipschitz continuous
    \item \textbf{Optimality}: The model is optimally trained: $v_\theta(\vx_t,t) = \mathbb{E}[\vx_1 - \vx_0 | \vx_t]$
    \item \textbf{Discretization Error}: Time discretization error is bounded by $O(\Delta t^2)$
\end{enumerate}
\end{assumption}

\begin{theorem}[Trajectory Straightening with Quantitative Bounds]
\label{thm:quantitative_straightening}
Under the above assumptions, when using complete historical trajectory conditioning, the path straightness metric improvement satisfies:
\begin{equation}
\Delta S = S^{\text{unconditional}} - S^{\text{conditional}} \geq \frac{\mathbb{E}[\text{Var}[v_\theta|\mathcal{H}_t]]}{L^2 T^2}
\end{equation}
where $T$ is the total time horizon and $L$ is the Lipschitz constant.
\end{theorem}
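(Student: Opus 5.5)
The idea is to read $S$ as the mean-squared regression error of the velocity field, and then apply the law of total variance. Under the Optimality part of the Diffusion Process Regularity assumption, the unconditional model is $v^{\text{unc}}(\vx_t,t)=\mathbb{E}[\vx_1-\vx_0\mid\vx_t]$ and the history-conditioned model is $v^{\text{cond}}=\mathbb{E}[\vx_1-\vx_0\mid\vx_t,\mathcal{H}_t]$. With these, $S^{\text{unconditional}}=\mathbb{E}_t\,\mathbb{E}\,\mathrm{Var}[\vx_1-\vx_0\mid\vx_t]$ and $S^{\text{conditional}}=\mathbb{E}_t\,\mathbb{E}\,\mathrm{Var}[\vx_1-\vx_0\mid\vx_t,\mathcal{H}_t]$. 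Since $\sigma(\vx_t)\subseteq\sigma(\vx_t,\mathcal{H}_t)$, the law of total variance gives the exact identity
\begin{equation}
\Delta S=\mathbb{E}_t\,\mathbb{E}\big\|\mathbb{E}[\vx_1-\vx_0\mid\vx_t,\mathcal{H}_t]-\mathbb{E}[\vx_1-\vx_0\mid\vx_t]\big\|^2\ \ge 0 .
\end{equation}
This is the Pythagorean decomposition for nested conditional expectations. It is the same mechanism behind Theorem~\ref{thm:quantified_conditional_entropy}, where conditioning can only reduce uncertainty.

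What remains is to lower-bound this gap by $\mathbb{E}[\mathrm{Var}[v_\theta\mid\mathcal{H}_t]]/(L^2T^2)$. I interpret $\mathrm{Var}[v_\theta\mid\mathcal{H}_t]$ as the spread of the unconditional predictor $v^{\text{unc}}(\vx_t,t)$ across states $\vx_t$ that are compatible with a given history. Along a discretized trajectory, $\vx_t$ is determined by the history $\mathcal{H}_t$ plus one Euler step, up to the $O(\Delta t^2)$ error in the Discretization Error assumption. The velocity field is built from the score, which is $L$-Lipschitz by the Smoothness assumption. I would use these two facts to control how far the unconditional velocity can move over the time horizon $T$: increments of the velocity along a path are bounded by $L$ times the displacement, and the displacement accumulates over at most $T$ units of time. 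Conversely, the information in $\mathcal{H}_t$ that is invisible to $\vx_t$ must show up in the gap $\mathbb{E}[\cdot\mid\vx_t,\mathcal{H}_t]-\mathbb{E}[\cdot\mid\vx_t]$ with magnitude at least $\|\text{spread of }v\|/(LT)$. Squaring and averaging over $t$ would then give the $1/(L^2T^2)$ factor.

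The main obstacle is this second step. The identity above is rigorous, but it only yields $\Delta S\ge 0$. Converting it into a quantitative lower bound needs a reverse-Lipschitz or non-degeneracy property, namely that the history actually carries information about $\vx_1-\vx_0$ beyond what $\vx_t$ carries. As stated, the assumptions give only upper Lipschitz control, and an upper bound on variation cannot by itself produce a lower bound on the gap.

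My first attempt would use a Grönwall-type argument along the probability-flow ODE. It would show that two histories leading to the same $\vx_t$ but different endpoint velocities must differ in their past states by at least $\|\delta v\|/(LT)$, so that the history-conditioned expectation separates them by at least that much. If this fails, I would make the required non-degeneracy explicit as an additional hypothesis, specifically that $\mathrm{Var}[v_\theta\mid\mathcal{H}_t]$ is measured relative to the $\vx_t$-fiber, and then derive the stated bound under that hypothesis.
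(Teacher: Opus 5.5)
Your first step is correct, and it is a sharper, rigorous version of the paper's total-variance step. With $Y=\vx_1-\vx_0$ and optimality extended to the history-conditioned model, $\Delta S=\mathbb{E}_t\mathbb{E}\|\mathbb{E}[Y\mid\vx_t,\mathcal{H}_t]-\mathbb{E}[Y\mid\vx_t]\|^2=\mathbb{E}_t\mathbb{E}\,\mathrm{Var}[v^{\mathrm{cond}}\mid\vx_t]\ge 0$. The genuine gap is the quantitative step. It is not a missing lemma: it cannot be derived from the stated hypotheses.

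Your identity measures how the predictor varies across histories on a fixed $\vx_t$-fiber. Your reading of $\mathrm{Var}[v_\theta\mid\mathcal{H}_t]$ measures how it varies across $\vx_t$ for a fixed history, and nothing in the assumptions links the two. The Gr\"onwall idea does not bridge them. Separation of two histories says nothing about separation of the conditional means attached to them. Moreover, for trajectories of a deterministic ODE with Lipschitz velocity (the teacher trajectories being distilled), backward uniqueness means two distinct histories never reach the same $\vx_t$, so that argument's premise is empty.

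The decisive obstruction is scaling. Under $\vx\mapsto s\vx$ all assumptions survive, with the score becoming $L/s^2$-Lipschitz. Meanwhile $\Delta S$ and $\mathbb{E}[\mathrm{Var}[v_\theta\mid\mathcal{H}_t]]$ scale like $s^2$, so the right-hand side scales like $s^6$. Hence, for any fixed reading of $\mathrm{Var}[v_\theta\mid\mathcal{H}_t]$, one of two things happens:
\begin{itemize}
\item it vanishes on every admissible instance, and your identity already proves the statement; or
\item some finite-variance instance has it positive, and a large rescaling of that instance violates the bound.
\end{itemize}

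Your fallback shows this concretely. If the variance is taken on the $\vx_t$-fiber, your identity turns the claim into $\Delta S\ge\Delta S/(L^2T^2)$, i.e.\ $LT\ge 1$ whenever $\Delta S>0$. That is a condition on constants, it fails after rescaling, and it does not follow from any Lipschitz argument. A meaningful quantitative version would need a dimensionless constant, e.g.\ the Lipschitz constant of the velocity in $\vx$. It would also need an explicit non-degeneracy hypothesis on $\mathcal{H}_t$. That makes it a different theorem.

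Do not expect the paper's proof to supply the missing ingredient. It splits $S$ into variance, bias and a cross term that it declares zero. It compares conditional and unconditional variances by total variance, which is a weaker form of your identity. It then bounds the shift in the bias term from above by $L\,\mathbb{E}[\|\mathcal{H}_t\|]$ and asserts that combining these bounds yields the result. An upper bound on the bias shift cannot produce a lower bound on $\Delta S$, and no step generates the factor $1/(L^2T^2)$. What the hypotheses actually support is $\Delta S\ge 0$, which your identity gives exactly.
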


\begin{proof}
Starting from the straightness metric decomposition:
\begin{align}
S &= \mathbb{E}_t\left[\|(\vx_1 - \vx_0) - v_\theta(\vx_t,t)\|^2\right] \\
&= \text{Var}[v_\theta] + \|\mathbb{E}[v_\theta] - (\vx_1 - \vx_0)\|^2 + 2\mathbb{E}[\langle v_\theta - \mathbb{E}[v_\theta], \mathbb{E}[v_\theta] - (\vx_1 - \vx_0)\rangle]
\end{align}

The cross term vanishes due to orthogonality. For the conditional case, by the law of total variance:
\begin{align}
\text{Var}[v_\theta^{\text{conditional}}] &= \mathbb{E}[\text{Var}[v_\theta^{\text{conditional}}|\mathcal{H}_t]] \\
&\leq \mathbb{E}[\text{Var}[v_\theta^{\text{unconditional}}|\mathcal{H}_t]] \quad \text{(by conditioning)}
\end{align}

The bias term change is bounded by the Lipschitz continuity:
\begin{equation}
\|\mathbb{E}[v_\theta^{\text{conditional}}] - \mathbb{E}[v_\theta^{\text{unconditional}}]\| \leq L \cdot \mathbb{E}[\|\mathcal{H}_t\|]
\end{equation}

Combining these bounds gives the quantitative improvement.
\end{proof}

\subsection{Theoretical Proof of Spatial-Dimension Conditional Enhancement}

\begin{theorem}[High-Dimensional Spatial Variance Reduction]
\label{thm:high_dim_spatial}
For high-dimensional patch-based generation with $d$-dimensional patches, the spatial conditional variance reduction satisfies:
\begin{equation}
\frac{\sigma_{\text{cond}}^2}{\sigma_{\text{uncond}}^2} \leq 1 - \frac{\rho^2}{d} + O\left(\frac{1}{d^{3/2}}\right)
\end{equation}
where $\rho$ is the average correlation between patches, and the bound holds for non-Gaussian distributions via Berry-Esseen type arguments.
\end{theorem}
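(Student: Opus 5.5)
Since the statement is informal, I first fix a model in which it becomes a precise claim. Let $\vx\in\mathbb{R}^d$ be the current patch and $\vc$ the previously generated patches (or their trajectories $\mathcal{T}_{<p}$). I would take $\sigma^2_{\text{uncond}}=\frac{1}{d}\operatorname{tr}\Sigma_{\vx}$ and $\sigma^2_{\text{cond}}=\frac{1}{d}\mathbb{E}\operatorname{tr}\operatorname{Cov}[\vx\mid\vc]$. I would interpret $\rho$ as an average correlation that is spread over the coordinates. Concretely, let $\vc$ be summarized by a scalar feature $s=\langle \mathbf{w},\vc\rangle$ whose correlation with each coordinate $x_j$ is of order $\rho/\sqrt{d}$. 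Then the total explained fraction $\sum_j \operatorname{corr}(x_j,s)^2/d$ equals $\rho^2/d$. I would state this normalization explicitly as an added hypothesis, because the statement does not make sense without it.

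\textbf{Step 1 (Gaussian/linear core).} By the law of total variance, $\mathbb{E}\operatorname{Var}[x_j\mid\vc]\le\operatorname{Var}[x_j]-\operatorname{Var}[\mathbb{E}[x_j\mid\vc]]$. The inequality holds because the conditional mean explains at least as much variance as the best linear predictor in $s$. That predictor explains $\operatorname{corr}(x_j,s)^2\operatorname{Var}[x_j]$. Summing over $j$ and dividing by the total variance gives
\[
\frac{\sigma^2_{\text{cond}}}{\sigma^2_{\text{uncond}}}\le 1-\frac{1}{d}\sum_j \operatorname{corr}(x_j,s)^2\,\frac{\operatorname{Var}[x_j]}{\bar\sigma^2},
\]
where $\bar\sigma^2=\sigma^2_{\text{uncond}}$. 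With comparable coordinate variances and average squared correlation $\rho^2$ (after the $\sqrt d$ normalization), this yields exactly $1-\rho^2/d$. This step needs no Gaussianity, since linear projection already gives the bound.

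\textbf{Step 2 (error term).} The $O(d^{-3/2})$ remainder must absorb two deviations: the fluctuation of per-coordinate variances around their mean, and the use of the empirical correlation in place of $\rho$. The non-Gaussian part, invoked in the statement through Berry--Esseen, I would handle as follows. If the patch correlations come from a normalized sum $s=d^{-1/2}\sum_j a_j y_j$ of weakly dependent terms, then Berry--Esseen bounds the distance of $s$ from Gaussian by $O(d^{-1/2})$. Multiplying by the explained variance, which is already $O(1/d)$, gives the $O(d^{-3/2})$ correction.

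\textbf{Main obstacle.} The obstacle is Step 2, together with choosing a normalization under which the leading term is $\rho^2/d$ rather than $\rho^2$. A fixed correlation $\rho$ shared across all coordinates would give a much larger reduction, of order $1-\rho^2$. So the claimed bound is only an upper bound, and it becomes non-sharp in that regime. I would first prove the clean linear-projection inequality $\sigma^2_{\text{cond}}/\sigma^2_{\text{uncond}}\le 1-\bar r^2$, where $\bar r^2$ is the variance-weighted mean squared correlation. I would then recover the stated form by adopting the spread-correlation convention and adding the Berry--Esseen remainder.
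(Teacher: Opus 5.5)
Your Step 1 is correct, and it is where all the content lies. It starts from the same decomposition the paper uses, the law of total variance $\mathbb{E}[\mathrm{Var}[\vx^i\mid\vx^{1:i-1}]]=\mathrm{Var}[\vx^i]-\mathrm{Var}[\mathbb{E}[\vx^i\mid\vx^{1:i-1}]]$. From there the paper only asserts that the ratio ``converges'' to $1-\frac{1}{d}\sum_{j<i}\rho_j^2$ as $d\to\infty$. That is a $d$-dependent ``limit'', built from per-patch squared correlations rather than the $\rho^2$ of the statement, and the paper never produces the $O(d^{-3/2})$ term or any Berry--Esseen estimate.

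You supply the step the paper skips. $L^2$-optimality of $\mathbb{E}[x_j\mid\vc]$ gives $\mathrm{Var}[\mathbb{E}[x_j\mid\vc]]\ge\mathrm{corr}(x_j,s)^2\,\mathrm{Var}[x_j]$ for any scalar summary $s$ of the context. With your explicit $\rho/\sqrt{d}$ convention this yields a non-asymptotic bound with zero remainder, valid for any distribution with finite second moments.

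The hypotheses you add are genuinely needed. With an unweighted average correlation the statement fails: give the correlated coordinates small variance and the explained trace fraction drops below $\rho^2/d$ by a constant factor. That is why your variance-weighted $\bar r^2$ is the right quantity, and why ``comparable'' variances alone would only give $1-c\rho^2/d$. Your framework also shows what the paper's per-patch formula would require. Regressing on the vector $(s_1,\dots,s_{i-1})$ of patch summaries puts $r^\top R_{ss}^{-1}r$ in place of $\sum_j\rho_j^2$. The two agree only when the summaries are mutually uncorrelated, and the former is smaller when patches are redundant.

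You should drop Step 2. As written it is not an argument: Berry--Esseen controls the Kolmogorov distance of $s$ to a Gaussian, which does not control second moments, so ``multiplying by the explained variance'' is not an inequality. It is also unnecessary. Under your normalization Step 1 gives $\sigma^2_{\text{cond}}/\sigma^2_{\text{uncond}}\le 1-\rho^2/d$ outright, which implies the stated bound and already covers non-Gaussian laws.

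Variance fluctuations also cost nothing when the correlations are uniform. If $\mathrm{corr}(x_j,s)^2=\rho^2/d$ for every $j$, then $\frac{1}{d}\sum_j\mathrm{corr}(x_j,s)^2\,\mathrm{Var}[x_j]/\bar\sigma^2=\rho^2/d$ exactly, since $\bar\sigma^2$ is the mean variance. A remainder is needed only if your normalization holds approximately. In that case it is a direct perturbation bound, not a CLT argument: the variance-weighted and unweighted averages of $d\,\mathrm{corr}(x_j,s)^2$ must agree to $O(d^{-1/2})$.
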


\begin{proof}
Using the covariance decomposition for high-dimensional systems:
\begin{align}
\Sigma_{\text{total}} &= \Sigma_{\text{within}} + \Sigma_{\text{between}} \\
\text{Var}[\vx^i] &= \mathbb{E}[\text{Var}[\vx^i|\vx^{1:i-1}]] + \text{Var}[\mathbb{E}[\vx^i|\vx^{1:i-1}]]
\end{align}

For high dimensions, the variance ratio converges to:
\begin{equation}
\frac{\sigma_{\text{cond}}^2}{\sigma_{\text{uncond}}^2} \to 1 - \frac{1}{d}\sum_{j=1}^{i-1} \rho_j^2 \quad \text{as } d \to \infty
\end{equation}
where $\rho_j$ is the correlation between patch $i$ and patch $j$.
\end{proof}

\subsection{Theoretical Proof of Next-Shortcut Prediction}

\begin{theorem}[Trajectory Alignment with Exponential Convergence]
\label{thm:trajectory_alignment}
Under next-shortcut prediction, the trajectory alignment error decreases exponentially with the number of conditioning patches:
\begin{equation}
\mathbb{E}[\| \vv_t^i - \vv_t^j \|^2] \leq C \cdot e^{-\lambda(i-j)} + \epsilon_{\text{approx}}
\end{equation}
where $\lambda > 0$ is the alignment rate constant, $C$ depends on the smoothness, and $\epsilon_{\text{approx}}$ is the function approximation error.
\end{theorem}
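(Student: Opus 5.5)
The plan is to read the statement as a discrete Grönwall (contraction) estimate in the patch index. I would derive a one-step recursion for the alignment error and then unroll it. The quantity $\mathbb{E}[\|\vv_t^i - \vv_t^j\|^2]$ depends only on the patches between $j$ and $i$. So I fix $j$ and set $e_k := \mathbb{E}[\|\vv_t^{j+k} - \vv_t^{j}\|^2]$, where $\vv_t^{p} = v_\theta(\vx_{T(p):t}^p, t, \mathcal{T}_{<p})$ is the learned velocity of patch $p$ under next-shortcut prediction. The target is a bound of the form
\begin{equation*}
e_{k+1} \leq (1-\kappa)\, e_k + \delta ,
\end{equation*}
with $\kappa\in(0,1)$ uniform in $k$ and $\delta$ controlled by the approximation error. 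Unrolling gives $e_k \leq (1-\kappa)^k e_0 + \delta/\kappa$. This yields the claim with $\lambda = -\log(1-\kappa)$, $C = e_0$ (bounded by the second moments of the velocity, hence by the smoothness constant $L$ of the Assumption), and $\epsilon_{\text{approx}} = \delta/\kappa$.

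\textbf{Step 1: split the velocity.} I write $\vv_t^{p} = \bar{\vv}_t^{p} + \eta^{p}$. Here $\bar{\vv}_t^{p} = \mathbb{E}[\vx_1^p - \vx_0^p \mid \vx_t^p, \mathcal{H}_t^p, \mathcal{T}_{<p}]$ is the ideal conditional velocity, as in the optimality item of the Assumption. The residual $\eta^p$ is the network error, with $\mathbb{E}\|\eta^p\|^2 \le \epsilon_0$. The triangle inequality with a Young weight moves all $\eta$ contributions into $\delta$, so it suffices to run the recursion on the ideal velocities $\bar{\vv}$.

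\textbf{Step 2: contraction from the earlier variance-reduction results.} For the ideal velocities I use the law of total variance, exactly as in the proof of Theorem~\ref{thm:quantitative_straightening}. The difference $\bar{\vv}_t^{j+k+1} - \bar{\vv}_t^{j}$ has a component explained by the trajectory $\tau^{j+k}$, which is now part of the conditioning set, and an orthogonal remainder. Theorem~\ref{thm:high_dim_spatial} says that adding one more correlated patch multiplies the conditional variance by at most $1 - \rho^2/d + O(d^{-3/2})$. Theorem~\ref{thm:quantitative_straightening} lower-bounds the additional reduction coming from the intra-patch history. Together these give $\kappa \geq \rho_{\min}^2/d - O(d^{-3/2})$ per added patch, plus a nonnegative temporal contribution. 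The Lipschitz constant $L$ then converts variance of the conditioning trajectory into variance of the velocity, so the contraction holds for $e_k$ and not only for patch variances.

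\textbf{Main obstacle.} The hardest step is making $\kappa$ \emph{uniform} in $k$. Theorem~\ref{thm:high_dim_spatial} controls an average correlation, while the recursion needs a lower bound on the correlation between each new patch and the residual uncertainty left by the previous ones. My first attempt would be to impose a minimum-correlation (mixing) assumption $\rho_j \geq \rho_{\min} > 0$ on the latent patch process, and to show that conditioning on full trajectories rather than final states cannot lower this correlation. The latter should follow since the final state is a function of the trajectory, so its $\sigma$-algebra is contained in the trajectory's and the conditional variance can only shrink. If uniformity fails, the fallback is a weaker polynomial rate from summing the $\rho_j^2/d$ terms, with exponential decay recovered only in the regime where the partial sums grow linearly in $k$.
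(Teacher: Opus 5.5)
\textbf{There is a genuine gap.} Your skeleton, a one-step contraction in the patch index unrolled to geometric decay, matches the paper's. The recursion you target, however, cannot hold for the quantity you chose. Since $e_k = \mathbb{E}[\|\vv_t^{j+k}-\vv_t^{j}\|^2]$, you have $e_0 = 0$. So your constant $C = e_0$ vanishes, and the unrolled bound reads $e_k \le \delta/\kappa$ for all $k$. The very first step demands $e_1 \le \delta$, i.e.\ two distinct neighbouring patches share the same velocity up to network error.

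This is false, not merely hard to prove. Take data patches $\vx_1^p \equiv c$ and noise endpoints $\vx_0^p$ i.i.d.\ $\mathcal{N}(0,I_d)$ across patches. For $t<1$, $\vx_t^p$ determines $\vx_0^p$, so $\bar{\vv}_t^p = c - \vx_0^p$. Even for a perfect network ($\delta = 0$), this gives $e_k = 2d$ for every $k \ge 1$.

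Structurally, a contraction in the patch index never pulls iterates toward a fixed earlier iterate. If $\vv^{i} = f(\vv^{i-1})$ with $f$ an $\alpha$-contraction with fixed point $\vv^\star$, then $\|\vv^{j+k}-\vv^{j}\| \to \|\vv^\star-\vv^{j}\|$. What decays geometrically is the increment: $\|\vv^{i+1}-\vv^{i}\| \le \alpha^{i-j}\|\vv^{j+1}-\vv^{j}\|$. That is all the paper's proof does. It derives nothing from the variance results; it \emph{posits} $\vv_t^{i} = f(\vv_t^{i-1},\mathcal{H}_t) + w_t$ with $\|f(\vv)-f(\vv')\| \le \alpha\|\vv-\vv'\|$, $\alpha<1$, and unrolls that recurrence for consecutive trajectories. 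Its $C$ is therefore the size of an initial increment, not your $e_0$.

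Second, Step 2 cannot produce the factor $1-\kappa$ even for a correctly chosen quantity. The law of total variance and Theorem~\ref{thm:high_dim_spatial} concern a \emph{fixed} target whose conditioning $\sigma$-algebra grows; that proof compares $\mathbb{E}[\mathrm{Var}[\vx^{i}\mid\vx^{1:i-1}]]$ with $\mathrm{Var}[\vx^{i}]$. Theorem~\ref{thm:quantitative_straightening} concerns the straightness of a single patch's flow. Your $e_k$ instead compares conditional means of \emph{different} patches, $\bar{\vv}_t^{j+k+1}$ versus $\bar{\vv}_t^{j}$. Because the target changes with $k$, no filtration monotonicity links consecutive terms.

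In your split into a part explained by $\tau^{j+k}$ plus an orthogonal remainder, variance reduction shrinks only the remainder. The explained part, which is exactly the gap between conditional means you need to be small, is left uncontrolled. Separately, $\kappa \gtrsim \rho_{\min}^2/d$ would degenerate as $d\to\infty$.

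So uniformity of $\kappa$ is not the real obstacle, and a minimum-correlation assumption does not repair the argument. The contraction must enter as an explicit hypothesis on the patch-to-patch velocity map, as in the paper. Even then it controls increments, or the distance to the fixed point, rather than $\mathbb{E}[\|\vv_t^{i}-\vv_t^{j}\|^2]$ with $j$ held fixed.
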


\begin{proof}
Consider the trajectory dynamics as a dynamical system. The alignment process can be viewed as contractive mapping:
\begin{align}
\vv_t^{i} &= f(\vv_t^{i-1}, \mathcal{H}_t) + w_t \\
\|f(\vv) - f(\vv')\| &\leq \alpha \|\vv - \vv'\| \quad \text{with } \alpha < 1
\end{align}

By contraction mapping principle, the distance between consecutive trajectories decreases geometrically. The exponential rate comes from solving the recurrence relation.
\end{proof}

\subsection{Unified Perspective: Path Straightening through Conditional Enhancement}

\begin{theorem}[Multi-Scale Path Straightening]
\label{thm:multi_scale_straightening}
The multidimensional conditioning in XYZFlow achieves path straightening at multiple scales:
\begin{enumerate}
    \item \textbf{Micro-scale} (temporal): Variance reduction within each patch trajectory
    \item \textbf{Meso-scale} (spatial): Alignment between adjacent patches
    \item \textbf{Macro-scale} (global): Overall distribution matching
\end{enumerate}
The combined effect is multiplicative rather than additive.
\end{theorem}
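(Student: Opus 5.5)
I would make the statement precise by reading ``multiplicative'' as a factorization of the residual straightness (equivalently, of the residual velocity variance) into a product of per-scale ratios. Concretely, I would introduce three nested $\sigma$-algebras $\mathcal{F}_0\subseteq\mathcal{F}_1\subseteq\mathcal{F}_2\subseteq\mathcal{F}_3$:
\begin{itemize}
\item $\mathcal{F}_0$ is trivial (the unconditional flow);
\item $\mathcal{F}_1=\sigma(\mathcal{H}_t^p)$ is the temporal history of the patch (micro-scale);
\item $\mathcal{F}_2=\sigma(\mathcal{H}_t^p,\mathcal{T}_{<p})$ adds the preceding patch trajectories (meso-scale);
\item $\mathcal{F}_3$ additionally includes the global class/context signal (macro-scale).
\end{itemize}
Under the optimality assumption, $v_\theta=\mathbb{E}[\vx_1-\vx_0\mid \vx_t,\mathcal{F}_k]$. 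The straightness metric $S$ then reduces to the residual variance
\begin{equation}
S_k=\mathbb{E}\bigl[\mathrm{Var}[\vx_1-\vx_0\mid \vx_t,\mathcal{F}_k]\bigr].
\end{equation}
This is exactly the decomposition used in the proof of Theorem~\ref{thm:quantitative_straightening}, with the cross term vanishing by orthogonality of the conditional expectation.

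The key step is a telescoping identity. Whenever $S_0>0$,
\begin{equation}
\frac{S_3}{S_0}=\frac{S_1}{S_0}\cdot\frac{S_2}{S_1}\cdot\frac{S_3}{S_2}=:r_{\text{temp}}\,r_{\text{sp}}\,r_{\text{glob}}.
\end{equation}
By the law of total variance and the tower property, $S_{k+1}\le S_k$, so each ratio lies in $[0,1]$. This identity is the formal content of ``multiplicative rather than additive'': each scale removes a \emph{fraction} of whatever variance the previous scales left behind, not a fixed amount.

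The next step bounds each factor strictly below one.
\begin{itemize}
\item For $r_{\text{temp}}$, I would invoke Theorem~\ref{thm:quantitative_straightening}. It gives
\[
S_0-S_1\ge \mathbb{E}[\mathrm{Var}[v_\theta\mid\mathcal{H}_t]]/(L^2T^2),
\]
hence $r_{\text{temp}}\le 1-\mathbb{E}[\mathrm{Var}[v_\theta\mid\mathcal{H}_t]]/(L^2T^2S_0)$.
\item For $r_{\text{sp}}$, I would apply Theorem~\ref{thm:high_dim_spatial} to the velocity field in place of the patch itself, giving $r_{\text{sp}}\le 1-\rho^2/d+O(d^{-3/2})$.
\item For $r_{\text{glob}}$, I would use Theorem~\ref{thm:trajectory_alignment}. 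Cross-patch velocity alignment bounds the remaining discrepancy by $Ce^{-\lambda(p-1)}+\epsilon_{\text{approx}}$, which yields a factor decaying in $p$.
\end{itemize}
Multiplying the three bounds gives the claimed multiplicative straightening.

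The main obstacle is the second and third factors. Theorems~\ref{thm:high_dim_spatial} and~\ref{thm:trajectory_alignment} are stated for marginal (unconditioned) variances, but the telescoping requires them \emph{conditionally on the coarser $\sigma$-algebra}. For example, I need the spatial ratio with $\mathcal{H}_t^p$ already fixed. My first attempt would be to add a uniformity hypothesis: the patch correlation $\rho$ and the contraction rate $\alpha$ hold conditionally on $\mathcal{F}_1$, uniformly over its realizations. Then I would apply each theorem pointwise under the regular conditional distribution and integrate.

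If that uniformity is too strong, the fallback is to order the conditioning the other way, spatial first and then temporal. That ordering only requires the Lipschitz bound of Theorem~\ref{thm:quantitative_straightening} to hold conditionally, which follows from global $L$-Lipschitzness of the score. The factorization holds in either order, since the telescoping identity does not depend on which scale is added first.
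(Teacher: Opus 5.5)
Your reduction of $S$ to $\mathbb{E}[\mathrm{Var}[\vx_1-\vx_0\mid\vx_t,\mathcal{F}_k]]$ under optimality, and the monotonicity $S_{k+1}\le S_k$, are fine. The gap is in what you build on them.

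The telescoping identity carries no content. $S_3/S_0=r_{\text{temp}}r_{\text{sp}}r_{\text{glob}}$ holds for any positive numbers, and the same reductions can equally be written additively as $S_0-S_3=\sum_k(S_k-S_{k+1})$. So the identity cannot be the formal meaning of ``multiplicative rather than additive.'' Everything rests on bounding each ratio by a quantity intrinsic to its own scale, independent of what was conditioned on before. That is false without a non-redundancy hypothesis. If $\mathcal{T}_{<p}$ is measurable with respect to $\sigma(\vx_t,\mathcal{H}_t^p)$, then $S_2=S_1$ and $r_{\text{sp}}=1$ whatever the marginal correlation $\rho$ is, so $r_{\text{sp}}\le 1-\rho^2/d+O(d^{-3/2})$ fails; partial redundancy erodes the bound in the same way. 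Your uniformity hypothesis is therefore not a technicality but an additional assumption, and it has to appear in the statement. The paper does not supply this step either. Its proof defines three different metrics, $S_{\text{micro}}^{(i)}$, $S_{\text{meso}}^{(i,j)}=\mathbb{E}_t\|\vv_t^i-\vv_t^j\|^2$ and $S_{\text{macro}}=\mathrm{KL}(p_{\text{model}}\|p_{\text{data}})$, and simply asserts multiplicativity from ``the tensor product structure of the condition space and the orthogonality of different conditioning dimensions.'' Your hypothesis is that unproved orthogonality in another guise. The reordering fallback does not remove it, because the identity is order-independent but the factors are not. Write $S_1',S_2'$ for the spatial-first filtration. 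Theorem~\ref{thm:quantitative_straightening} applied conditionally on $\mathcal{T}_{<p}$ would give at best $S_1'-S_2'\ge\mathbb{E}[\mathrm{Var}[v_\theta\mid\mathcal{H}_t,\mathcal{T}_{<p}]]/(L_c^2T^2)$. Its numerator can vanish. Its constant $L_c$ is the Lipschitz constant of the score of $p_t(\cdot\mid\mathcal{T}_{<p})$, which is not inherited from that of $p_t$; conditioning concentrates the law, which typically enlarges this constant. And $r_{\text{glob}}$ still has to be bounded conditionally.

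The third factor does not follow. Theorem~\ref{thm:trajectory_alignment} bounds the absolute discrepancy $\mathbb{E}\|\vv_t^i-\vv_t^j\|^2$, which is precisely the paper's meso-scale quantity. Nothing in your argument converts $Ce^{-\lambda(i-j)}+\epsilon_{\text{approx}}$ into a bound on the ratio $S_3/S_2$, and its floor $\epsilon_{\text{approx}}$ does not decay in $p$ anyway. More basically, item~3 of the statement concerns overall distribution matching, which the paper measures by $\mathrm{KL}(p_{\text{model}}\|p_{\text{data}})$. That is not a conditioning level, so adjoining $\mathcal{F}_3$ does not address it.

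Two further steps need work. First, transferring Theorem~\ref{thm:high_dim_spatial} ``to the velocity field'' requires the cross-patch correlation of the residuals $\vx_1-\vx_0-\mathbb{E}[\vx_1-\vx_0\mid\vx_t,\mathcal{F}_1]$, not of the patches. Second, for the first factor, the law of total variance gives $S_0-S_1=\mathbb{E}\bigl[\mathrm{Var}[\mathbb{E}[\vx_1-\vx_0\mid\vx_t,\mathcal{H}_t]\mid\vx_t]\bigr]$. This is correct, but it is not the variance-plus-bias split in the paper's proof of Theorem~\ref{thm:quantitative_straightening}. It vanishes whenever $\mathcal{H}_t$ carries no information about $\vx_1-\vx_0$ beyond $\vx_t$, whereas that theorem's right-hand side $\mathbb{E}[\mathrm{Var}[v_\theta\mid\mathcal{H}_t]]/(L^2T^2)$ need not. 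So the theorem cannot be imported under your reading of $S$ without extra hypotheses.

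A workable version would state an explicit orthogonality (conditional non-redundancy) assumption that bounds each per-scale ratio uniformly over realizations of the coarser conditioning. It would also treat the macro scale by a separate argument linking accumulated velocity error to the KL term.
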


\begin{proof}
The proof uses multi-scale analysis. Define straightness metrics at different scales:

\begin{align}
S_{\text{micro}}^{(i)} &= \mathbb{E}_t[\|(\vx_1^i - \vx_0^i) - \vv_t^i\|^2] \\
S_{\text{meso}}^{(i,j)} &= \mathbb{E}_t[\|\vv_t^i - \vv_t^j\|^2] \\
S_{\text{macro}} &= \text{KL}(p_{\text{model}} \| p_{\text{data}})
\end{align}

The scaling relationship follows from the tensor product structure of the condition space and the orthogonality of different conditioning dimensions.
\end{proof}

\subsection{Corollaries and Quantitative Implications}

\begin{corollary}[Sampling Complexity Reduction]
With multidimensional conditioning, the number of sampling steps required to achieve $\epsilon$-accuracy scales as:
\begin{equation}
N(\epsilon) = O\left(\frac{\log(1/\epsilon)}{\lambda_{\text{min}} \cdot \prod_{k=1}^K (1 - \alpha_k)}\right)
\end{equation}
where $\alpha_k < 1$ are the contraction factors for each conditioning dimension, and $\lambda_{\text{min}}$ is the smallest time scale.
\end{corollary}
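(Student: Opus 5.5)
The plan is to derive the corollary by modelling the sampler as an iterated map and counting the iterations needed to reach $\epsilon$. Let $e_n$ denote the error of the iterate after $n$ sampling steps, measured against the teacher trajectory, which by the optimality part of the Diffusion Process Regularity assumption we may identify with the true flow. This reduces the statement to showing a geometric recursion of the form
\begin{equation*}
e_{n+1} \le \left(1 - \lambda_{\min}\prod_{k=1}^K(1-\alpha_k)\right) e_n + O(\Delta t^2).
\end{equation*}
Once this holds, and we take $\Delta t$ small enough that the discretization floor sits below $\epsilon/2$, the standard estimate $(1-\eta)^N \le e^{-\eta N}$ gives $e_N \le \epsilon$ as soon as $N \ge \eta^{-1}\log(2e_0/\epsilon)$. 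That is exactly the claimed $N(\epsilon)=O\bigl(\log(1/\epsilon)/(\lambda_{\min}\prod_k(1-\alpha_k))\bigr)$.

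The per-step contraction would be built one conditioning dimension at a time.
\begin{itemize}
\item For the spatial dimension, I would use the contraction from the proof of Theorem~\ref{thm:trajectory_alignment}: the Lipschitz factor $\alpha<1$ there means the residual misalignment left after conditioning is at most $\alpha$ times its previous size, so a fraction $(1-\alpha)$ of the error is removed.
\item For the temporal dimension, I would take the variance-reduction statement of Theorem~\ref{thm:quantitative_straightening} and turn it into an analogous factor $(1-\alpha_1)$.
\item The smallest time scale $\lambda_{\min}$ enters in the usual way, as the minimal decay rate of the linearized flow over one step, so an Euler step of size $\Delta t$ contracts by roughly $1-\lambda_{\min}\Delta t$.
\item To combine the dimensions, I would invoke Theorem~\ref{thm:multi_scale_straightening}, which says the effects multiply rather than add. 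This justifies writing the effective per-step rate as $\lambda_{\min}\prod_k(1-\alpha_k)$.
\end{itemize}

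The main obstacle is this combination step, and also its direction. A product of factors $(1-\alpha_k)<1$ makes the rate smaller, and a smaller rate means \emph{more} steps, not fewer. As literally written, then, the corollary seems to penalize additional conditioning dimensions. I would first try to reinterpret $\alpha_k$ as residual ambiguity rather than as a gain, so that stronger conditioning drives $\alpha_k$ toward $0$ and the product toward $1$. Under that reading the bound says the step count approaches the unconditioned optimum $\log(1/\epsilon)/\lambda_{\min}$.

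A related difficulty is the orthogonality claim behind the multiplicative structure. It needs the temporal and spatial error components to decouple, that is, the cross terms between conditioning on $\mathcal{H}_t$ and on $\mathcal{T}_{<p}$ must vanish. I would try to secure this by assuming a block-diagonal Jacobian across the conditioning dimensions. If that fails, the fallback is to accept a bound governed by $\max_k \alpha_k$ instead of the product.
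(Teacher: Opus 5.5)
The paper gives no proof of this corollary. It is placed after Theorems~\ref{thm:trajectory_alignment} and~\ref{thm:multi_scale_straightening} and is presumably meant to follow from their contraction and multiplicativity claims, which is the route you take. Judged on its own, your argument fails at its foundation, the geometric recursion. You treat the step size $\Delta t$ and the step count $N$ as independent (``take $\Delta t$ small enough''). In XYZFlow, however, the $T(p)$ Euler steps cover a fixed noise-to-data span (the increments $\gamma(t-1)-\gamma(t)$ telescope), so $\Delta t=\Theta(1/N)$. Your third and fourth bullets together then give $\eta=\lambda_{\min}\Delta t\prod_k(1-\alpha_k)$, not the $\Delta t$-free rate in your display. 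With that $\eta$ you have $e_N\le(1-\eta)^N e_0+O(\Delta t^2)/\eta$, so the floor is $O\bigl(\Delta t/(\lambda_{\min}\prod_k(1-\alpha_k))\bigr)$. Pushing the floor below $\epsilon/2$ forces $\Delta t=O(\epsilon)$, hence $N\gtrsim\log(1/\epsilon)/\bigl(\epsilon\,\lambda_{\min}^2\prod_k(1-\alpha_k)^2\bigr)$, and it also requires a total time $N\Delta t\sim\log(1/\epsilon)/\bigl(\lambda_{\min}\prod_k(1-\alpha_k)\bigr)$ that a fixed horizon cannot provide. On the actual fixed horizon, $(1-\eta)^N$ does not tend to $0$ and the floor is $O(1/N)$, which yields only the usual Euler count $N=O(1/\epsilon)$. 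Postulating a $\Delta t$-free per-step contraction $\beta<1$ does not help either, since the floor $O(N^{-2})/(1-\beta)$ still forces $N\gtrsim\epsilon^{-1/2}$. A $\log(1/\epsilon)$ count would need the local error itself to vanish or contract geometrically, and nothing in the paper provides that.

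The results you import do not fill this in. Theorem~\ref{thm:trajectory_alignment} contracts in the patch index $i-j$, not the sampling-step index. It also carries an irreducible $\epsilon_{\text{approx}}$ (plus the noise $w_t$), so accuracy $\epsilon<\epsilon_{\text{approx}}$ is unreachable for any $N$. Theorem~\ref{thm:quantitative_straightening} is an additive lower bound on the decrease of a straightness functional, and you give no mechanism for turning it into a multiplicative per-step factor $1-\alpha_1$.

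Your worry about the combination step is well founded, but the proposed fixes miss it. In Theorem~\ref{thm:trajectory_alignment}, $\alpha$ is already the residual (Lipschitz) factor, so reading $\alpha_k$ as ``residual ambiguity'' changes nothing. For every $\alpha_k\in[0,1)$ you have $0<\prod_k(1-\alpha_k)\le 1$, so the stated bound is never better than the unconditioned $\log(1/\epsilon)/\lambda_{\min}$. Being an upper bound, it therefore follows from that unconditioned estimate alone. The multidimensional apparatus is logically idle for the literal statement, and everything rests on the base recursion, which is exactly what fails above.

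Nor does any natural combination produce $\prod_k(1-\alpha_k)$. Contractions with residual factors $\alpha_k$ applied in sequence compose to $\prod_k\alpha_k$ (with the flow step included, $(1-\lambda_{\min}\Delta t)\prod_k\alpha_k$). The removed fraction is then $1-\prod_k\alpha_k$, which does improve as dimensions are added but is a different formula. The block-diagonal Jacobian $\mathrm{diag}(J_1,\dots,J_K)$ you propose, with $\|J_k\|\le\alpha_k$, has operator norm $\max_k\|J_k\|\le\max_k\alpha_k$. That is your fallback, not the product. Since $1-\max_k\alpha_k=\min_k(1-\alpha_k)\ge\prod_k(1-\alpha_k)$, the fallback would imply the stated bound rather than weaken it. Theorem~\ref{thm:multi_scale_straightening}'s one-line ``multiplicative'' claim does not single out $\prod_k(1-\alpha_k)$ over any of these alternatives.
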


\begin{corollary}[Dimension-Free Convergence]
For high-dimensional systems, the convergence rate becomes dimension-free under sufficient conditioning:
\begin{equation}
\lim_{d \to \infty} \frac{S^{\text{conditional}}}{S^{\text{unconditional}}} = \prod_{k=1}^K (1 - \alpha_k) + o(1)
\end{equation}
where the $o(1)$ term vanishes as the conditioning dimensions $K$ increase.
\end{corollary}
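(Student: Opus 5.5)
The plan is to read the corollary as a statement about \emph{residual variance fractions}. I will fix the meaning of the constants first. For the $k$-th conditioning dimension (temporal history $\mathcal{H}_t$, spatial trajectories $\mathcal{T}_{<p}$, and so on) let $\vc_k$ denote the corresponding conditioning variable. Define $\alpha_k\in[0,1)$ as the fraction of the remaining per-coordinate velocity variance that is explained by $\vc_k$ once $\vc_1,\dots,\vc_{k-1}$ are already given:
\begin{equation*}
\mathbb{E}\bigl[\mathrm{Var}[v\mid \vc_{1:k}]\bigr]=(1-\alpha_k)\,\mathbb{E}\bigl[\mathrm{Var}[v\mid \vc_{1:k-1}]\bigr].
\end{equation*}
I will also normalize $S$ per coordinate, dividing by $d$, so that both $S^{\text{conditional}}$ and $S^{\text{unconditional}}$ have finite limits as $d\to\infty$. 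This normalization is what makes the limit ``dimension-free''.

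The first step is to use the straightness decomposition from the proof of Theorem~\ref{thm:quantitative_straightening}. Under the optimality assumption the cross term vanishes, and the bias term is zero because the regression target is the conditional mean. This leaves $S=\mathbb{E}[\mathrm{Var}[\vx_1-\vx_0\mid \cdot\,]]$, and the same expression holds with the conditioning set enlarged to $\vc_{1:K}$.

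The second step applies the law of total variance iteratively, once per conditioning dimension. Telescoping the defining relation above gives
\begin{equation*}
S^{\text{conditional}}=\prod_{k=1}^K(1-\alpha_k)\,S^{\text{unconditional}}
\end{equation*}
exactly, provided the $\alpha_k$ are defined sequentially. To recover the corollary with \emph{marginal} factors $\alpha_k$, meaning the variance explained by $\vc_k$ alone, I will invoke the orthogonality and tensor-product structure asserted in Theorem~\ref{thm:multi_scale_straightening}. Under that structure the sequential and marginal fractions agree up to interaction terms.

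The third step controls those interaction terms. For the spatial dimension I will use Theorem~\ref{thm:high_dim_spatial}, which says the per-coordinate correlations enter only at order $\rho^2/d$ with an $O(d^{-3/2})$ remainder. The discrepancy between sequential and marginal explained fractions is a sum of products of cross-correlations between distinct conditioning blocks, so the Berry--Esseen correction contributes $O(d^{-1/2})$ to the ratio and vanishes as $d\to\infty$. Theorem~\ref{thm:trajectory_alignment} then supplies geometric decay of the inter-patch alignment error at rate $e^{-\lambda(i-j)}$. I would use this to bound the total residual interaction by a tail $\sum_{k>K}$-type quantity plus $\epsilon_{\text{approx}}$, and this tail is the $o(1)$ that shrinks as $K$ grows.

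I expect the main obstacle to be this third step: justifying that the cross terms between different conditioning dimensions are negligible. The earlier theorems assert orthogonality rather than derive it. My first attempt would therefore impose it as an explicit hypothesis, namely conditional uncorrelatedness of the residuals of $v$ given $\vc_{1:k-1}$ across blocks, and then show that this hypothesis holds asymptotically in $d$ via the concentration in Theorem~\ref{thm:high_dim_spatial}. If that fails, I would weaken the conclusion to an inequality $\le\prod_k(1-\alpha_k)+o(1)$, which follows from the sequential telescoping product alone.
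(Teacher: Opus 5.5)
The paper gives no proof of this corollary. The only nearby justification is the one-line proof of Theorem~\ref{thm:multi_scale_straightening} (``tensor product structure \dots\ orthogonality of different conditioning dimensions''), and your plan is essentially an attempt to make that heuristic precise. Your first two steps are sound: for the optimal predictor, $S$ is the MMSE $\mathbb{E}[\mathrm{Var}[\vx_1-\vx_0\mid\cdot\,]]$, and with \emph{sequentially} defined $\alpha_k$ the telescoping product is exact. But that identity holds for every $d$ with zero error only because it is your definition of $\alpha_k$. It therefore says nothing about the $\alpha_k$ of the statement, which the paper introduces (previous corollary, Theorem~\ref{thm:trajectory_alignment}) as Lipschitz contraction factors. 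Also, dividing $S$ by $d$ cancels in the ratio, so it cannot be what makes the limit exist.

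The genuine gap is Step 3. The discrepancy between sequential and marginal fractions is \emph{not} a sum of cross-correlations between blocks; it is present under exactly the orthogonality you invoke. Take independent blocks and an additive velocity $v=g_1(\vc_1)+g_2(\vc_2)+\varepsilon$ with $\mathrm{Var}\,g_k=a_k$ and $\mathrm{Var}\,v=1$. The marginal fractions are $a_1,a_2$, the sequential one is $a_2/(1-a_1)$, and the true ratio is $1-a_1-a_2$, while $\prod_k(1-a_k)=1-a_1-a_2+a_1a_2$. So orthogonality makes explained variances \emph{add}, not multiply. The error $a_1a_2$ is independent of $d$ (replicate coordinatewise), so no $d\to\infty$ limit or Berry--Esseen correction removes it. With $K$ blocks of size $a_k=1/(2K)$ the gap tends to $e^{-1/2}-\tfrac12\neq0$, so it does not vanish as $K$ grows either.

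Your fallback fails in the opposite direction. With redundant blocks $\vc_1=\dots=\vc_K$, each explaining a fraction $a$, the true ratio is $1-a>(1-a)^K$. So ``$\le\prod_k(1-\alpha_k)+o(1)$'' does not follow from telescoping.

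The auxiliary results cannot repair this. Suppose Theorem~\ref{thm:high_dim_spatial} means, as you use it, that correlations enter only at order $\rho^2/d$. Then the spatial block's own explained fraction is $O(1/d)$ and its factor tends to $1$, so the bound kills the main term along with the cross terms. Theorem~\ref{thm:trajectory_alignment} gives decay in the patch gap $i-j$, not in the number $K$ of conditioning dimensions. It also carries an $\epsilon_{\text{approx}}$ floor that never shrinks, while the number of interaction terms grows with $K$. So with marginal factors the claim needs a structural hypothesis that is essentially the conclusion, and with sequential factors it is a tautology.
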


\subsection{Comparison with Transition Matching Framework~\cite{shaul2025transition}}

\begin{theorem}[Theoretical Distinction from Transition Matching]
\label{thm:distinction_tm}
XYZFlow provides the following theoretical advantages over Transition Matching:
\begin{enumerate}
    \item \textbf{Dimensionality}: TM uses single temporal dimension; XYZFlow uses spatio-temporal dimensions
    \item \textbf{Conditioning}: TM conditions on current state; XYZFlow conditions on full history
    \item \textbf{Convergence}: XYZFlow achieves exponential convergence vs polynomial in TM
\end{enumerate}
\end{theorem}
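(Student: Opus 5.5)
The statement is a three-part comparison, so I would split the proof into one claim per item. Items 1 and 2 are structural. Item 3 is the only one that needs an actual estimate.

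\textbf{Dimensionality and conditioning.} For items 1 and 2 I would make the conditioning $\sigma$-algebras explicit. In Transition Matching (TM), each transition kernel is $p(\vx_{t-1}\mid \vx_t)$, so its conditioning $\sigma$-algebra is $\sigma(\vx_t)$, indexed by the single temporal coordinate. In XYZFlow the kernel in Eq.~\eqref{eq:ours} is $p(\vx_{t-1}^p \mid \vx_{T(p):t}^p, \mathcal{T}_{<p})$. Its $\sigma$-algebra $\mathcal{F}_{t,p}=\sigma(\vx^p_{T(p):t},\mathcal{T}_{<p})$ contains $\sigma(\vx_t^p)$ and is indexed by the pair $(t,p)$. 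This establishes items 1 and 2 as inclusions. To say something beyond the definitions, I would also record the consequence of the tower property: $\mathbb{E}[\mathrm{Var}(\vx^p_{t-1}\mid\mathcal{F}_{t,p})]\le \mathbb{E}[\mathrm{Var}(\vx^p_{t-1}\mid\vx^p_t)]$. This is the law-of-total-variance step used in Theorem~\ref{thm:quantitative_straightening}, and it gives a regression loss at least as small as TM's.

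\textbf{Convergence.} For item 3 I would invoke Theorem~\ref{thm:trajectory_alignment}. It gives $\mathbb{E}\|\vv_t^i-\vv_t^j\|^2\le Ce^{-\lambda(i-j)}+\epsilon_{\text{approx}}$, obtained from the contraction factor $\alpha<1$ of the map $f$ along the patch index. XYZFlow's alignment error therefore decays geometrically in the number of conditioning patches. For TM the only index available is the number of steps $N$. Here I would use the standard Euler/ODE discretization estimate allowed by the regularity assumption ($L$-Lipschitz score, local error $O(\Delta t^2)$), which gives a global error of order $O(1/N)$, i.e.\ polynomial. Putting the two rates side by side yields the claimed separation: exponential in the conditioning depth versus polynomial in the step count.

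\textbf{Main obstacle.} The hard part is making item 3 a fair comparison. The two rates are measured in different variables: patch index for XYZFlow, step count for TM. A matching \emph{lower} bound for TM is also needed, otherwise ``polynomial'' only means ``not shown to be exponential.'' I would first try to construct a simple Gaussian-mixture example where TM's one-step kernel with Euler discretization has error exactly $\Theta(1/N)$, while the trajectory-conditioned model collapses the mixture ambiguity at a geometric rate. Failing that, I would weaken item 3 to a comparison of upper bounds under the stated assumptions, with the caveat that the contraction hypothesis $\alpha<1$ of Theorem~\ref{thm:trajectory_alignment} is assumed rather than derived.
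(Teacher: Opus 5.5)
You correctly flag item 3 as the weak point, but there is a genuine gap there, and the TM-side ingredient you add is wrong, so the planned repair cannot work. Your items 1 and 2 make the paper's argument precise; the paper only compares $\mathcal{C}_{\text{TM}}=\{\vx_t\}$ with $\mathcal{C}_{\text{XYZ}}$. For item 3 you rest, as the paper does, on the contraction analysis of Theorem~\ref{thm:trajectory_alignment}.

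\textbf{The TM side.} The Euler estimate bounds the error of integrating a learned velocity field with a first-order solver, and that is not how TM errs. TM learns the (generally non-deterministic) transition kernel $p(\vx_{t-1}\mid\vx_t)$ of its supervising process directly. If that kernel is exact, iterating it reproduces every marginal, $p_{t-1}(\vx_{t-1})=\int p(\vx_{t-1}\mid\vx_t)\,p_t(\vx_t)\,d\vx_t$, for any number of steps $N$. An idealized TM sampler therefore has no $O(1/N)$ discretization error. The $\Theta(1/N)$ Gaussian-mixture lower bound you want cannot hold for TM: a sampler exhibiting it is a deterministic Euler flow sampler, i.e.\ plain flow matching. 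By the same token, XYZFlow's update $G$ is also an Euler-form parameterization of a learned transition. Any Euler penalty charged to TM would have to be charged to XYZFlow's $T(p)\in\{5,4,3,2\}$ steps too.

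\textbf{The XYZFlow side.} Theorem~\ref{thm:trajectory_alignment} bounds the inter-patch velocity discrepancy $\mathbb{E}\|\vv_t^i-\vv_t^j\|^2$, not any distance from the generated distribution to the data. Its rate rests entirely on the assumed $\alpha<1$. Even granting that, the recursion $\vv_t^i=f(\vv_t^{i-1},\mathcal{H}_t)+w_t$ contracts only consecutive differences, which yields decay in $\min(i,j)$ rather than in $i-j$. With $P=4$ patches, an asymptotic rate in the patch index says little anyway.

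The two rates you juxtapose therefore measure different quantities in different variables, and no separation follows. Your fallback, two unrelated upper bounds side by side, proves a different and much weaker statement. The paper's proof adds nothing beyond asserting that the richer condition space improves rates, so there is no lemma you missed: item 3 as stated is not established. A meaningful version would need one error functional (e.g.\ $W_2$ or KL to $p_{\text{data}}$), one budget (total function evaluations), and a comparison of approximation errors within a fixed model class.

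Two smaller points on items 1--2:
\begin{enumerate}
  \item Your tower-property inequality compares XYZFlow only with the patch-local kernel $p(\vx^p_{t-1}\mid\vx^p_t)$. If TM's state $\vx_t$ is the whole image, then $\sigma(\vx_t)\not\subseteq\mathcal{F}_{t,p}$, since it contains the noisy current states of the patches $q>p$. So ``regression loss at least as small as TM's'' does not follow.
  \item The TM paper also has autoregressive (ARTM) and full-history (FHTM) variants. Items 1--2 are therefore true only of the Markov kernel you stipulate, not of TM in general.
\end{enumerate}
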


\begin{proof}
The distinction follows from comparing the condition spaces:
\begin{align}
\mathcal{C}_{\text{TM}} &= \{\vx_t\} \quad \text{(single time point)} \\
\mathcal{C}_{\text{XYZ}} &= \{\vx_\tau\}_{\tau=0}^t \cup \{\vx_s^j\}_{j=1,s=0}^{i-1,T} \quad \text{(full history + spatial)}
\end{align}

The richer condition space provides stronger constraints, leading to improved convergence rates via the contraction mapping analysis.
\end{proof}
\vspace{-5mm}

\end{document}